\newcommand\given[1][]{\:#1\vert\:}
\DeclareMathOperator{\attn}{attn}
\DeclareMathOperator{\reg}{reg}
\DeclareMathOperator{\softmax}{softmax}
\DeclareMathOperator{\diag}{diag}
\DeclareMathOperator{\E}{E}
\DeclareMathOperator{\lse}{logsumexp}
\def\eqref#1{(\ref{#1})}
\newtheorem{theorem}{Theorem}
\newtheorem{lemma}{Lemma}
\newtheorem{proposition}{Proposition}
\newenvironment{customthm}[1]
  {\innercustomthm}
  {\endinnercustomthm}
\newcommand{\argmin}{\mathop{\mathrm{argmin}}}
\newcommand{\Xb}{{\boldsymbol X}}
\newcommand{\Wb}{{\boldsymbol W}}
\newcommand{\Vb}{{\boldsymbol V}}
\newcommand{\ab}{{\boldsymbol a}}
\newcommand{\Yb}{{\boldsymbol Y}}
\newcommand{\Cb}{{\boldsymbol C}}
\newcommand{\Sb}{{\boldsymbol S}}
\newcommand{\Rbold}{{\boldsymbol R}}
\newcommand{\Rb}{{\mathbb R}}
\newcommand{\Rd}{{\mathbb R}}
\newcommand{\xb}{{\boldsymbol x}}
\newcommand{\yb}{{\boldsymbol y}}
\newcommand{\gb}{{\boldsymbol g}}
\newcommand{\ub}{{\boldsymbol u}}
\newcommand{\zb}{{\boldsymbol z}}
\newcommand{\rb}{{\boldsymbol r}}
\newcommand{\vb}{{\boldsymbol v}}
\newcommand{\qb}{{\boldsymbol q}}
\newcommand{\kb}{{\boldsymbol k}}
\newcommand{\epsilonb}{{\boldsymbol \epsilon}}
\newcommand{\boldEc}{\boldsymbol{\mathcal E}}
\newcommand{\vect}{\textsc{Vec}}
\newcommand{\unvect}{\textsc{UnVec}}
\newcommand{\e}{{\boldsymbol e}}
\newcommand{\Ib}{{\boldsymbol I}}
\newcommand{\zetab}{{\boldsymbol \zeta}}
\newcommand{\x}{{\boldsymbol x}}
\newcommand{\D}{{\boldsymbol {\mathcal D}}}
\newcommand{\Q}{{\boldsymbol Q}}
\newcommand{\K}{{\boldsymbol K}}
\newcommand{\V}{{\boldsymbol V}}
\newcommand{\W}{{\boldsymbol W}}
\newcommand{\M}{{\boldsymbol M}}
\newcommand{\C}{{\boldsymbol C}}
\newcommand{\A}{{\boldsymbol A}}
\newcommand{\B}{{\boldsymbol B}}
\renewcommand{\c}{{\boldsymbol c}}
\newcommand{\z}{{\boldsymbol z}}
\newcommand{\Nc}{{\mathcal N}}
\title{Energy-Based Cross Attention for Bayesian Context Update in Text-to-Image Diffusion Models}
\author{Geon Yeong Park$^{1*}$ \quad Jeongsol Kim$^{1*}$ \quad Beomsu Kim$^{2}$ \quad Sang Wan Lee$^{1,2,3 \dag}$ \quad Jong Chul Ye$^{2 \dag}$\\
$^{1}$Bio and Brain Engineering, $^{2}$Kim Jaechul Graduate School of AI, $^{3}$Brain and Cognitive Sciences \\
Korea Advanced Institute of Science and Technology (KAIST), Daejeon, Korea \\
$*, \dag$: Co-first and Co-corresponding authors \\
{\tt\small \{pky3436, jeongsol, beomsu.kim, sangwan, jong.ye\}@kaist.ac.kr}
}
\begin{document}

\maketitle

\begin{abstract}
Despite the remarkable performance of text-to-image diffusion models in image generation tasks, recent studies have raised the issue that generated images sometimes cannot capture the intended semantic contents of the text prompts, which phenomenon is often called semantic misalignment. To address this, here we present a novel energy-based model (EBM) framework for adaptive context control by modeling the posterior of context vectors. Specifically, we first formulate EBMs of latent image representations and text embeddings in each cross-attention layer of the denoising autoencoder. Then, we obtain the gradient of the log posterior of context vectors, which can be updated and transferred to the subsequent cross-attention layer, thereby implicitly minimizing a nested hierarchy of energy functions. 
Our latent EBMs further allow zero-shot compositional generation as a linear combination of cross-attention outputs from different contexts. 
Using extensive experiments, we demonstrate that the proposed method is highly effective in handling various image generation tasks, including multi-concept generation, text-guided image inpainting, and real and synthetic image editing. Code: \url{https://github.com/EnergyAttention/Energy-Based-CrossAttention}.
\end{abstract}

\section{Introduction}
Diffusion models (DMs) have made significant advances in controllable multi-modal generation tasks \cite{song2020score}, including text-to-image synthesis. The recent success of text-to-image diffusion models, e.g., Stable Diffusion \cite{rombach2022high}, Imagen \cite{saharia2022photorealistic}, etc., is mainly attributed to the combination of high-fidelity DMs with high-performance large language models (LMs). 

Although text-to-image DMs have shown revolutionary progress, recent studies have shown that the current state-of-the-art models often suffer from semantic misalignment problems, where the generated images do not accurately represent the intended semantic contents of the text prompts. For example, \cite{chefer2023attend} discovered the catastrophic neglect problem, where one or more of the concepts of the prompt are neglected in generated images. Moreover, for a multi-modal inpainting task with text and mask guidance, \cite{xie2022smartbrush} found that the text-to-image DMs may often fail to fill in the masked region precisely following the text prompt. 

{Therefore, this work focuses on obtaining a harmonized pair of latent image representations and text embeddings, i.e., context vectors, to generate semantically aligned images. In order to mitigate the misalignment, instead of leveraging fixed context vectors, we aim to establish an \textit{adaptive} context by modeling the posterior of the context, i.e. $p(\text{context} \mid \text{representations})$. Note that this is a significant departure from the previous methods which only model $p(\text{representations} \mid \text{context})$ with \textit{frozen} context vectors encoded by the pretrained textual encoder.}

{Specifically, we introduce a novel energy-based Bayesian framework, namely energy-based cross-attention (EBCA), which approximates maximum a posteriori probability (MAP) estimates of context vectors given observed latent representations. Specifically, to model $p(\text{context} \mid \text{representations})$, we first consider analogous $p(K_l | Q_{t, l})$ in the latent space of the intermediate cross-attention layer of a denoising auto-encoder, i.e., cross-attention space. Here, $K_l$ and $Q_{t,l}$ correspond to the key and query in $l$-th layer at time $t$ that encode
the context from the text and the image representation, respectively. 
Inspired by the energy-based perspective of attention mechanism \cite{ramsauer2020hopfield}, we then formulate energy functions $E(K_l; Q_{t,l})$ in each cross-attention space to model the posterior. Finally, we create a correspondence between the context and representation by minimizing these parameterized energy functions. 
More specifically, by  obtaining the gradient of the log posterior of context vectors, a nested hierarchy of energy functions can be implicitly minimized by cascading the updated contexts to the subsequent cross-attention layer (Figure \ref{fig:model}) .} 

Moreover, our energy-based perspective of cross-attention also allows zero-shot compositional generation due to the inherent compositionality of Energy-Based Models (EBMs). This involves the convenient integration of multiple distributions, each defined by the energy function of a specific text embedding.
In practical terms, this amalgamation can be executed as a straightforward linear combination of cross-attention outputs that correspond to all selected editing prompts.

We demonstrate the effectiveness of the proposed EBM framework in various text-to-image generative scenarios, including multi-concept generation, text-guided inpainting, and compositional generation. The proposed method is training-free, easy to implement, and can potentially be integrated into most of the existing text-to-image DMs.




\begin{figure}[t]
    \centering
    \includegraphics[width=\linewidth]{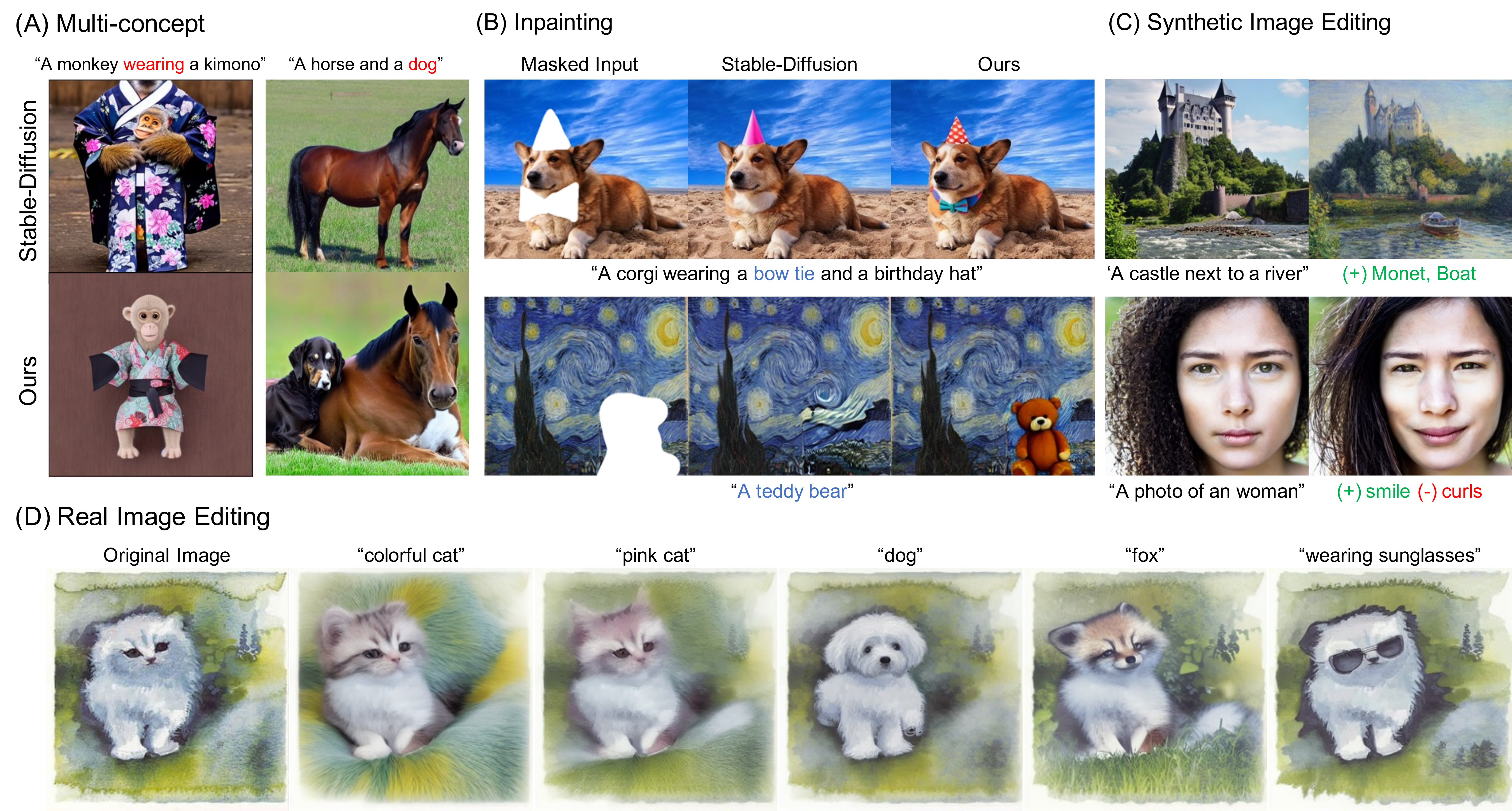}
    \vspace{-0.5cm}
    \caption{The energy-based cross-attention improves the semantic alignment between given text and the generated sample. The proposed method could be leveraged for multiple applications without additional training.}
        \vspace{-0.5cm}
    \label{fig:main}
\end{figure}

\vspace{-0.2cm}

\section{Preliminaries}
\vspace{-0.1cm}

\begin{figure}[t]
    \centering
    \includegraphics[width=0.98\linewidth]{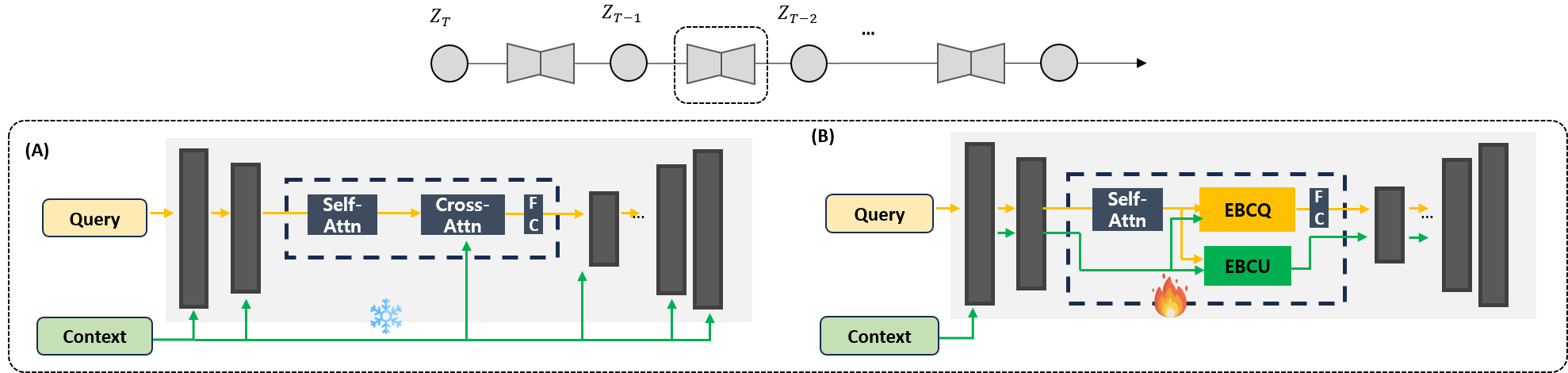}
    \caption{Comparison between the Stable-Diffusion and the proposed method. (\textbf{A}) The Stable-Diffusion uses fixed context embedding encoded by pre-trained CLIP. (\textbf{B}) The proposed method allows adaptive context embedding through energy-based context update (EBCU) and energy-based composition of queries (EBCQ).}
    \label{fig:model}
\end{figure}

\subsection{Diffusion Model}
\label{section: background_DM}
Diffusion models~\cite{song2019generative, ho2020denoising, song2020score, kingma2021variational, karras2022elucidating} aims to generate samples from the Gaussian noise by iterative denoising processes. Given clean data $\xb_0\sim p_{data}(\xb_0)$, diffusion models define the forward sampling from $p(\xb_t|\xb_0)$ as
$\xb_t = \sqrt{\bar\alpha_t}\xb_0 + \sqrt{1-\bar\alpha_t}\zb_t,$ where $\zb_t\sim\mathcal{N}(0, \boldsymbol{I})$, $t\in [0, 1]$. 
Here, for the Denoising Diffusion Probabilistic Model (DDPM)~\cite{ho2020denoising}, the noise schedule $\beta_t$ is an increasing sequence of $t$, with $\bar\alpha_t := \prod_{i=1}^t \alpha_t,\, \alpha_t := 1 - \beta_t$. 
  The goal of diffusion model training is to obtain a neural network $\epsilonb_{\theta^*}$ that satisfies
\begin{equation}
    \label{eqn:train}
    \theta^* = \argmin_{\theta} \mathrm{E}_{\xb_t\sim p(\xb_t|\xb_0),\xb_0\sim p_{data}(\xb_0),\epsilonb \sim \mathcal{N}(0, \mathbf{I})} \left[\|\epsilonb_{\theta}(\xb_t, t) - \epsilonb\|^2_2 \right],
\end{equation}
so that the reverse sampling from $q(\xb_{t-1}|\xb_t, \xb_{\theta^*}(\xb_t, t))$ is achieved by
\begin{align}\label{eq:reverse}
    \x_{t-1}=\frac{1}{\sqrt{\alpha_t}}\Big{(}\x_t-\frac{1-\alpha_t}{\sqrt{1-\bar{\alpha}_t}}\boldsymbol{\epsilon}_{\theta^*}(\x_t,t)\Big{)}+\sigma_t\boldsymbol{z}_t,
\end{align}
where $\z_t\sim\Nc(0,\Ib)$ and $\sigma^2$ is variance which is set as $\beta$ in DDPM.
With iterative process, we can sample the image $\x_0$ from initial sample $\x_{T}\sim\Nc(0,\Ib)$.

%
%
Since diffusion models require the iterative sampling on high dimensional space, they are computationally expansive and time consuming. To mitigate this limitation, Latent Diffusion Model (LDM)~\cite{rombach2022high} has proposed diffusion processes on the compressed latent space using pre-trained auto-encoder. Furthermore, by introducing language model-based cross-attention to diffusion model's U-Net neural backbone~\cite{ronneberger2015u}, LDM enables token-based conditioning method such as text-to-image.

\subsection{Energy-based Perspective of Attention Mechanism}
\label{section: background_attention}

\textbf{Definition.} Given an $N$-dimensional vector $\vb$, its $\lse$ and $\softmax$ are defined as
\begin{align*}
\lse(\vb,\beta) \coloneqq \textstyle \beta^{-1} \log(\sum_{i = 1}^N \exp(v_i)), &\quad
\softmax(\vb) \coloneqq \exp(\vb - \lse(\vb,1)).
\end{align*}
where $v_i$ denotes the $i$-th element of $\vb$.
For a given a matrix $\A$, its $\lse_i(\A)$ and $\softmax_i(\A)$ are understood as  taking the corresponding operation along the $i$-th dimension of $\A$. So, for instance, $i=1$, $\softmax_i(\A)$ consist of column vectors that sum to $1$.

%
\textbf{Energy model.} 
Following the success of the attention mechanism, several studies have focused on establishing its theoretical foundation. One promising avenue is interpreting the attention mechanism using the Energy-Based Model (EBM) framework. This line of research begins with recent research on modern Hopfield networks \cite{ramsauer2020hopfield, hoover2023energy}, which gradually builds up to the self-attention mechanism of the Transformer model.

The Hopfield network is a dense associative memory model that aims to associate an input with its most similar pattern. Specifically, it constructs an energy function to model an energy landscape that contains basins of attraction around desired patterns. Recently, modern Hopfield networks \cite{ramsauer2020hopfield, demircigil2017model, krotov2018dense} has introduced a new family of energy functions, which aim to improve its pattern storage capacity or make it compatible with continuous embeddings. To this end, \cite{ramsauer2020hopfield} proposed the following energy function of a state pattern (query) $\zetab\in \mathbb{R}^d$ parameterized by $N$-stored (key) patterns $\Xb = [\xb_1 \dots, \xb_N] \in \mathbb{R}^{d \times N}$ and $\beta > 0$: 
\begin{equation}
\label{eq: hopfield energy function}
    E(\zetab; \Xb) = \frac{1}{2} \zetab^T \zetab - \lse(\Xb^T \zetab, \beta).
\end{equation}


Intuitively, the first term ensures the query remains finite, while the second term measures the individual alignment of the query with every stored pattern. Based on the Concave-Convex-Procedure (CCCP) \cite{yuille2001concave}, \cite{ramsauer2020hopfield} derives the update rule for a state pattern $\boldEc$ and time $t$ as follows:

\begin{proposition}[\cite{ramsauer2020hopfield}] 
Define the update rule $f: \Rb^d \rightarrow \Rb^d$ as follows:
\begin{equation}
\label{eq: hopfield attention}
    \zetab^{\text{new}} = f(\zetab) = \Xb\softmax (\beta \Xb^T \zetab)
\end{equation}  
Then, the update rule decreases the loss \eqref{eq: hopfield energy function} and converges globally. In another word, for $\zetab^{(t+1)} = f(\zetab^{(t)})$, the energy $E(\zetab^{(t)}) \rightarrow E(\zetab^*)$ for $t \rightarrow \infty$ and a fixed point $\zetab^*$.
\end{proposition}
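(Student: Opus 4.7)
The plan is to follow the standard Concave–Convex Procedure (CCCP) template \cite{yuille2001concave}. First I would split the energy as $E(\zetab;\Xb) = E_1(\zetab) + E_2(\zetab)$ with $E_1(\zetab) = \tfrac{1}{2}\zetab^T\zetab$ and $E_2(\zetab) = -\lse(\Xb^T\zetab,\beta)$. The quadratic $E_1$ is strictly convex, and $E_2$ is concave because $\lse$ is a convex function (its Hessian is a Gibbs covariance matrix, hence positive semidefinite) and the composition with a linear map preserves convexity. The CCCP recipe then gives the implicit update $\nabla E_1(\zetab^{\text{new}}) = -\nabla E_2(\zetab^{\text{old}})$, which is guaranteed to satisfy $E(\zetab^{\text{new}}) \le E(\zetab^{\text{old}})$.

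Next I would verify that this implicit step coincides with the explicit rule \eqref{eq: hopfield attention}. Since $\nabla E_1(\zetab) = \zetab$, the left-hand side collapses to $\zetab^{\text{new}}$. A direct computation gives $\nabla_{\zetab}\lse(\Xb^T\zetab,\beta) = \Xb\,\softmax(\beta\Xb^T\zetab)$, so $-\nabla E_2(\zetab^{\text{old}}) = \Xb\,\softmax(\beta\Xb^T\zetab^{\text{old}}) = f(\zetab^{\text{old}})$. Thus CCCP produces exactly the proposed update, and monotone descent $E(\zetab^{(t+1)})\le E(\zetab^{(t)})$ follows immediately.

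For the global convergence claim, I would argue as follows. Every iterate for $t\ge 1$ is a convex combination of the columns of $\Xb$, since $\softmax$ outputs a probability vector; hence the sequence $\{\zetab^{(t)}\}_{t\ge 1}$ lies in the compact convex hull $\mathrm{conv}(\xb_1,\dots,\xb_N)$. On this compact set $E$ is continuous and therefore bounded below, so the nonincreasing sequence $E(\zetab^{(t)})$ converges to some $E^*$. By Bolzano–Weierstrass the iterates have a convergent subsequence $\zetab^{(t_k)}\to\zetab^*$, and continuity of $f$ together with the strict-descent property of CCCP (any non-fixed point strictly decreases $E$, owing to strict convexity of $E_1$) forces $f(\zetab^*) = \zetab^*$, i.e.\ $\nabla E(\zetab^*) = 0$. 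A standard argument then promotes subsequential convergence to full convergence $\zetab^{(t)}\to\zetab^*$ and $E(\zetab^{(t)})\to E(\zetab^*)$.

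The main obstacle is the last step: monotone decrease plus compactness only gives convergence of the energy values and existence of limit points, not that the whole trajectory converges to a single fixed point. To close this gap cleanly one typically invokes a Zangwill-type global convergence theorem or exploits strict convexity of $E_1$ to show that the CCCP step is a well-defined continuous map with $E$ as a strict Lyapunov function off its fixed-point set; uniqueness of the accumulation point then follows because the fixed-point set of $f$ is a discrete (generically finite) subset of $\mathrm{conv}(\xb_1,\dots,\xb_N)$ on each energy level. I would cite \cite{ramsauer2020hopfield} for the precise technical conditions rather than re-deriving them here.
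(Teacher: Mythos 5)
The paper does not prove this proposition at all --- it is imported verbatim from \cite{ramsauer2020hopfield}, and the only thing the paper adds is the observation that the update coincides with a gradient step of size $\eta=1$. Your reconstruction via the CCCP split $E = \tfrac{1}{2}\zetab^T\zetab - \lse(\Xb^T\zetab,\beta)$ into convex and concave parts, the identification $\nabla_{\zetab}\lse(\Xb^T\zetab,\beta) = \Xb\,\softmax(\beta\Xb^T\zetab)$, and the compactness argument on $\mathrm{conv}(\xb_1,\dots,\xb_N)$ is exactly the argument used in the cited source, and it is correct. The one caveat you flag --- that monotone descent plus compactness yields only convergence of the energy values and stationarity of limit points, not convergence of the full iterate trajectory --- is real, but it is not a gap for the proposition as stated: the claim is ``global convergence'' in Zangwill's sense, i.e.\ $E(\zetab^{(t)}) \to E(\zetab^*)$ for some fixed point $\zetab^*$, which your argument already delivers without the final uniqueness-of-accumulation-point step.
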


Note that \eqref{eq: hopfield attention} is equivalent to a gradient descent update to minimize (\ref{eq: hopfield energy function}) with a step size $\eta=1$: 
\begin{equation}
\label{eq: hopfield gradient descent}
    \zetab \leftarrow \zetab - \eta\nabla_{\zetab} \E(\zetab; \Xb) =  \zetab - \eta(\zetab - \Xb\softmax (\beta \Xb^T \zetab)).
\end{equation}


\textbf{Connection to the attention of the transformer.} Remarkably, this implicit energy minimization is closely related to the attention mechanism as shown in \cite{ramsauer2020hopfield}. To see this, suppose that $\yb_i, \rb_i \in \Rb^d$ is given as stored (key) and state (query) patterns, respectively. Let $\W_K, \W_Q \in \Rb^{d \times d_H}$ represent linear maps for $\yb_i$ and $\rb_i$, respectively. Then, we introduce $\xb_i = \W_K^T \yb_i \in \Rb^{d_H}$ and $\zetab_i = \W_Q^T \rb_i \in \Rb^{d_H}$. Let $\Yb = (\yb_1, \dots, \yb_N)^T \in \Rb^{N \times d}$, and $\Rbold = (\rb_1, \dots, \rb_S)^T \in \Rb^{S \times d}$. We define $\Xb^T = \K = \Yb \W_K \in \Rb^{N \times d_H}$, and $\Q = \Rbold \W_Q \in \Rb^{S \times d_H}$. By plugging $\Xb^T = \K$ and $\zetab=\qb_i$ into (\ref{eq: hopfield attention}) for all $i$, we obtain that:
$\Q^T= \K^T \softmax_1(\beta \K \Q^T) \in \Rb^{d_H \times S}$. By taking the transpose, we obtain ${\Q} = \softmax_2(\beta \Q \K^T) \K$. Let $\Vb \in \Rb^{N \times d_V}$ denote the value matrix as $\Vb = \Yb \W_K \W_V = \K \W_V$, which will replace the $\K$ outside of $\softmax_2$. Then, we finally obtain
\begin{equation}
\label{eq: transformer attention}
\Q^{new} = \softmax_2 (\beta{\Q \K^T}) \V,
\end{equation}
which is exactly the transformer attention with $\beta=1/\sqrt{d_H}$. This connection affords us the insightful theoretical ground of the attention mechanism: the update step of the attention mechanism in a Transformer layer acts as an inner-loop optimization step, minimizing an implicit energy function that is determined by queries, keys, and values.

\section{Energy-based Cross-Attention}
\label{section: method}
Recall that our objective is to generate semantically correct images by harmonizing latent (U-Net) representations and context vectors within the denoising autoencoder.
To achieve this, we propose a simple but effective Bayesian framework, namely energy-based cross-attention (EBCA). Specifically, we  perform test-time {optimization of} context vectors within the cross-attention spaces of a denoising autoencoder to implicitly minimize a specially designed energy function. Note that this is a significant departure from the conventional text-guided diffusion models which have leveraged \textit{fixed} context vectors embedded by pre-trained CLIP \cite{radford2021learning} to all cross-attention layers.

\subsection{Energy-based Bayesian Context Update}

\textbf{Energy function.} Our focus is on the cross-attention space of a time-dependent denoising auto-encoder, utilizing the conventional U-Net neural architecture. Here, we refer to latent representations as the representations of intermediate layers in U-Net unless otherwise specified. Let $L$ be the number of cross-attention layers. For each $l$-th layer at time $t$, we define the queries matrix $\Q_{t,l} \in \Rb^{P_l^2 \times  d_l }$, and the keys and values matrices $\K_l \in \Rb^{N \times  d_l }$ and $\V_l \in \Rb^{N \times d_l }$, respectively. Here, $P_l$ represents the spatial dimension of latent representations in the $l$-th layer. Given context vectors $\C \in \Rb^{N \times d_c}$, we map $\K_l$ and $\V_l$ with $\W_{K, l}, \W_{V, l} \in \Rb^{d_c \times d_l}$, such that $\K_l= \C \W_{K, l}$ and $\V_l= \C \W_{V, l}$. 
In the following, time $t$ and layer index $l$ are omitted in notations $\Q_{t, l}$ and $\K_l$ for simplicity.

{The main goal is to obtain a maximum a posteriori probability (MAP) estimate of context vectors $\C$ given a set of observed latent representations of queries $\Q_{t, l}$. First, based on the energy functions (\ref{eq: E(Q|K)}) and (\ref{eq: E(K)}), the posterior distribution of $\K$ can be defined by using Bayes' rule: $p(\K \given[] \Q) = p(\Q \given[] \K) p(\K) / p(\Q)$, where (\ref{eq: E(Q|K)}) and (\ref{eq: E(K)}) are leveraged to model the distribution $p(\Q \given[] \K)$ and $p(\Q)$, respectively. Then, in order to obtain a MAP estimation of $\C$, we approximate the posterior inference using the gradient of the log posterior. The gradient can be estimated as follows:} 
\begin{equation}
\begin{split}
\label{eq: grad bayes rule}
    \nabla_\K \log p(\K \given[] \Q) &= \nabla_\K \log p(\Q \given[] \K) + \nabla_\K \log p(\K) = - \big( \nabla_\K \E(\Q ; \K) + \nabla_\K \E(\K) \big).
\end{split}
\end{equation}

Motivated by the energy function in (\ref{eq: hopfield energy function}), we first introduce a new conditional energy function w.r.t $\K$ as follows:
\begin{equation}
\label{eq: E(Q|K)}
\begin{split}
    \E(\Q; \K) &= \frac{\alpha}{2} \diag(\K \K^T) - \sum_{i=1}^N \lse(\Q \kb_i^T, \beta),
\end{split}
\end{equation}
where $\kb_i$ denotes the $i$-th row vector of $\K$,
 $\alpha \geq 0$, and $\diag(\A) = \sum_{i=1}^N A_{i,i}$ for a given $\A \in \Rb^{N \times N}$.  Intuitively, $\lse(\Q \kb_i^T, \beta)$ term takes a smooth maximum alignment between latent representations $\qb_j,j=1,\cdots,P^2$ and a given text embedding $\kb_i$. Let ${j^*} = \arg \max_j \qb_j\kb_i^T $. Then, the implicit minimization of $- \lse$ term encourages each $\kb_i$ to be semantically well-aligned with a corresponding spatial token representation $\qb_{j^*}$. In turn, this facilitates the retrieval and incorporation of semantic information encoded by $\kb_i$ into the spatial region of $\qb_{j^*}$.

{The $\diag(\K\K^T)$ term in \eqref{eq: E(Q|K)} serves as a regularizer that constrains the energy of each context vector $\kb_i$, preventing it from exploding during the maximization of $\lse(\Q\kb_i^T, \beta)$, thereby ensuring that no single context vector excessively dominates the forward attention path. In this regard, we also propose a new $\Q$-independent prior energy function $\E(\K)$ given by:}
\begin{equation}
\label{eq: E(K)}
\begin{split}
    \E(\K) = \lse \Big(\frac{1}{2} \diag(\K \K^T), 1 \Big)
          = \log \sum_{i=1}^N \exp \Big(\frac{1}{2} \kb_i \kb_i^T \Big).
\end{split} 
\end{equation}

{Instead of penalizing each norm uniformly, it primarily regularizes the smooth maximum of $\| \kb_i \|$ which improves the stability in implicit energy minimization. We empirically observed that the proposed prior energy $\E(\K)$ serves as a better regularizer compared to the original $\diag(\K\K^T)$ term (related analysis in (\ref{eq: real context update rule}) and appendix \ref{sec: analysis}).}

Although our energy function is built based on the energy function, in contrast to (\ref{eq: hopfield energy function}),  the proposed one is explicitly formulated for implicit energy minimization w.r.t \textit{keys} $\K$ and the associated \textit{context vectors} $\C$, which is different from the theoretical settings in Section \ref{section: background_attention} w.r.t $\Q$ \cite{ramsauer2020hopfield}. It is worth noting that the two energy functions are designed to serve different purposes and are orthogonal in their application. 


\textbf{MAP estimation.} Based on the proposed energy functions, we derive the update rule of key $\K$ and context $\C$ following (\ref{eq: grad bayes rule}):

\begin{theorem}
\label{thm: MAP estimate}
{For the energy functions (\ref{eq: E(Q|K)}) and (\ref{eq: E(K)}), the gradient of the log posterior is given by:
\begin{equation}
\begin{split}
\label{eq: grad log posterior}
\nabla_\K \log p(\K \given[] \Q) &= \softmax_2 \big(\beta \K \Q^T \big)\Q   -  \bigg(\alpha\Ib + \D \Big( \softmax \Big(\frac{1}{2} \diag(\K \K^T) \Big) \Big) \bigg)\K,
\end{split}
\end{equation}
where $\D(\cdot)$ is a vector-to-diagonal-matrix operator. 

Then, by using the chain rule 
 the update rule of context vectors $\C$ is derived as follows:
\begin{equation}
\label{eq: theory context update rule}
\C_{n+1} = \C_n + \gamma  \bigg( \softmax_2 \big(\beta \K \Q^T \big) \Q  -  \bigg(\alpha\Ib + \D \Big( \softmax \Big(\frac{1}{2} \diag(\K\K^T ) \Big) \Big) \bigg)\K \bigg)\W_K^T,
\end{equation}
where $\gamma > 0$ is a step size. } 
\end{theorem}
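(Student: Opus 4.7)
The plan is to compute the two gradients $\nabla_\K\E(\Q;\K)$ and $\nabla_\K\E(\K)$ separately, add them according to equation (\ref{eq: grad bayes rule}), and then promote the result to an update on $\C$ via the linear map $\K=\C\W_K$.

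First I would differentiate the conditional energy (\ref{eq: E(Q|K)}) row by row. The quadratic term $\frac{\alpha}{2}\diag(\K\K^T)=\frac{\alpha}{2}\sum_i\|\kb_i\|^2$ yields $\alpha\K$ immediately. For the log-sum-exp term, I would use the standard identity $\nabla_\vb \lse(\vb,\beta)=\softmax(\beta\vb)$, together with the chain rule applied to $\vb=\Q\kb_i^T$, to get $\nabla_{\kb_i}\lse(\Q\kb_i^T,\beta)=\Q^T\softmax(\beta\Q\kb_i^T)$. Stacking these row vectors for $i=1,\dots,N$ and recognizing that the $i$-th row of $\softmax_2(\beta\K\Q^T)$ is precisely $\softmax(\beta\Q\kb_i^T)^T$, I would conclude
\begin{equation*}
\nabla_\K\E(\Q;\K)=\alpha\K-\softmax_2(\beta\K\Q^T)\Q.
\end{equation*}

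Next I would differentiate the prior (\ref{eq: E(K)}). Writing $\E(\K)=\log\sum_i\exp(\tfrac{1}{2}\|\kb_i\|^2)$ and applying the chain rule gives
\begin{equation*}
\nabla_{\kb_i}\E(\K)=\frac{\exp(\tfrac12\|\kb_i\|^2)}{\sum_j\exp(\tfrac12\|\kb_j\|^2)}\,\kb_i=\bigl[\softmax\bigl(\tfrac12\diag(\K\K^T)\bigr)\bigr]_i\,\kb_i.
\end{equation*}
Reassembling into a matrix gives $\nabla_\K\E(\K)=\D\bigl(\softmax(\tfrac12\diag(\K\K^T))\bigr)\K$, where the diagonal matrix $\D(\cdot)$ simply rescales each row $\kb_i$ by its softmax weight. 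Substituting the two gradients into $\nabla_\K\log p(\K\mid\Q)=-\nabla_\K\E(\Q;\K)-\nabla_\K\E(\K)$ yields (\ref{eq: grad log posterior}) after factoring $\K$ on the right.

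For the context-vector update, I would invoke the linear parameterization $\K=\C\W_K$ so that, by the chain rule for matrix-valued functions, $\nabla_\C\log p(\K\mid\Q)=\bigl(\nabla_\K\log p(\K\mid\Q)\bigr)\W_K^T$. A single step of gradient ascent with step size $\gamma$ then produces (\ref{eq: theory context update rule}).

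The main obstacle I anticipate is purely bookkeeping: keeping the row/column conventions consistent between the two softmax axes (the softmax over spatial queries induced by $-\lse$ in $\E(\Q;\K)$ versus the softmax over context tokens induced by the prior), and making sure the per-row gradient $\Q^T\softmax(\beta\Q\kb_i^T)$ really aggregates into the matrix product $\softmax_2(\beta\K\Q^T)\Q$. No deeper obstacle is expected, since once the row-wise derivatives are written out correctly, the expression in (\ref{eq: grad log posterior}) follows directly and the chain rule through $\W_K$ is immediate.
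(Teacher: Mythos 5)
Your proposal is correct and follows essentially the same route as the paper's proof: differentiate the quadratic term to get $\alpha\K$, use $\nabla_{\vb}\lse(\vb,\beta)=\softmax(\beta\vb)$ with the chain rule to obtain the per-row gradient $\Q^T\softmax(\beta\Q\kb_i^T)$ and stack it into $\softmax_2(\beta\K\Q^T)\Q$, compute the row-rescaling diagonal softmax for the prior, combine via Bayes' rule, and push through $\K=\C\W_K$ to pick up the $\W_K^T$ factor. The only difference is presentational: the paper formalizes the row-stacking with vectorization and Kronecker-product identities under the denominator-layout convention, whereas you differentiate row by row and reassemble directly, which yields the identical expressions.
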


In practice, we empirically observed that the nonzero $\alpha$ in \eqref{eq: E(Q|K)}
often leads to an over-penalization of contexts, which can ultimately cause some contexts to vanish. To address this, we set $\alpha=0$. Moreover, we found it beneficial to assign distinct step sizes $\gamma_{\attn}$ and $\gamma_{\reg}$ as follows:
\begin{equation}
\label{eq: real context update rule}
\C_{n+1} = \C_n + \gamma_{\attn} \underbrace{\softmax_2 \big(\beta  \K\Q^T \big)\Q \W_K^T }_{\textbf{Attention}}  - \gamma_{\reg} \underbrace{ \D \Big( \softmax \Big(\frac{1}{2} \diag(\K\K^T ) \Big) \Big) \K \W_K^T}_{\textbf{Regularization}},
\end{equation}
where the first and second terms are named as {\em attention} and {\em regularization} term, respectively. 

Our theoretical observations offer valuable insights into implicit energy minimization by modifying the forward path of cross-attention. Inspired by these observations, we have transplanted the energy-based cross-attention layers to pre-trained text-to-image diffusion models. We first illustrate the challenges associated with adopting EBCA in a deep denoising auto-encoder and subsequently demonstrate how to overcome them in practice.

If a single recurrent Transformer block were available, a single energy function would be minimized for a given cross-attention layer by recurrently passing the updated context $\C_{n+1}$.
However, in practical applications, there are multiple layer- and time-dependent energy functions in conventional deep denoising auto-encoder, which makes it infeasible to minimize each energy function individually. To address this challenge, we  implicitly minimize a nested hierarchy of energy functions in a single forward pass based on our theoretical observations. More details are followed.

\textbf{Algorithms.} At a given time step $t$, we initialize the context vectors $\C_{1, t}$ with text embeddings $\C_{CLIP}$ obtained from a pre-trained CLIP. Then, within the $n$-th cross-attention layer with $\C_{n, t}$ ($1 \leq n \leq L$), we compute the updated context vectors $\C_{n+1, t}$ using the gradients in Theorem \ref{thm: MAP estimate} and  (\ref{eq: real context update rule}). We then \textit{cascade} $\C_{n+1, t}$ to the next $(n+1)$-th cross-attention layer. We do not forward the final context $\C_{L+1, t}$ at time $t$ to time $t+1$, as the distributions of $\Q_t$ and $\Q_{t+1}$ can be significantly different due to the reverse step of the diffusion model. Instead, we reinitialize $\C_{1, t+1}$ with $\C_{CLIP}$. The pseudo-code for the proposed framework is provided in appendix \ref{sec: pseudocode}. 

The sequence of energy-based cross-attention layers is designed to minimize \textit{nested} energy functions. While a single-layer update in a single step may not lead to optimal convergence, the proposed layer-cascade context optimization can synergistically improve the quality of context vectors. Notably, our proposed method does not incur additional computational costs in practice since the gradient term in Theorem \ref{thm: MAP estimate} relies on the existing keys, queries, and attention maps in the cross-attention layer, which can be obtained for free in the forward path.

\subsection{Energy-based Composition of Queries}
\label{sec: composition}
In addition to the above key advantages, the cross-attention space EBMs shed new light on the zero-shot compositional generation. Recent studies \cite{du2020compositional, du2019implicit} have demonstrated that the underlying distributions of EBMs can be combined to represent different compositions of concepts. For example, given a data point $\xb$ and independent concept vectors $\c_1, \dots, \c_n$, the posterior likelihood of $\xb$ given a conjunction of specific concepts is equivalent to the product of the likelihood of each individual concept as follows:
\begin{equation}
\label{eq: composition example}
    p(\xb \given[] \c_1, \dots, \c_n) = \prod_{i} p(\xb \given[] \c_i) \propto e^{- \sum_i \E(\xb \given \c_i)},
\end{equation}
where each $\E(\xb \given \c_i)$ represent independently trained EBM with concept code $\c_i$. While it is an appealing solution to the controllable generation, it is notoriously difficult to train EBMs \cite{du2020improved, nijkamp2019learning} in a way to make themselves scalable to high-resolution image generation. Instead of directly training EBMs in pixel space, we leverage the cross-attention space EBMs and the generative power of state-of-the-art pre-trained DMs to achieve high-fidelity compositional generations. 

More specifically, assume that we have \textit{main} context vectors $\C_1$ embedded from a main prompt, e.g. "\texttt{a castle next to a river}", and a set of independent \textit{editing} context vectors, $\Cb = \{\C_2, \dots, \C_{M}\}$, each embedded from different editorial prompt, e.g. "\texttt{monet style}", "\texttt{boat on a river}", etc. Then, we define the keys $\K_{l, s}$ for context $s$ within a cross-attention layer of index $l$ as $\K_{l, s} = \C_s\W_{K, l} , \forall s \in \{1, 2, \dots, M\}$. The index $l$ will be omitted for notational simplicity. Then, for a given key $\K_s$, we consider the energy function in cross-attention space w.r.t \textit{queries} $\Q$:
\begin{equation}
\label{eq: query E}
\begin{split}
    \E(\Q; \K_s) &= \frac{1}{2} \diag (\Q\Q^T) - \sum_{i=1}^{P^2} \lse(\K_s\qb_{i}^T  , \beta),
\end{split}
\end{equation}
{which is directly motivated by (\ref{eq: hopfield energy function}).} We then introduce the compositional energy function $\hat{\E}$, for the concept conjunction of $\Cb$ as in (\ref{eq: composition example}) and the updated rule as done in \eqref{eq: transformer attention}:
\begin{equation}
\label{eq: composition E}
\begin{split}
    \hat{\E}(\Q; \{\K_s\}_{s=1}^M) &= \frac{1}{M} \sum_{s=1}^M \E(\Q; \K_s) \\
                                &= \frac{1}{2} \diag (\Q\Q^T) - \frac{1}{M} \sum_{s=1}^M \sum_{i=1}^{P^2} \lse(\K_s \qb_{i}^T, \beta),
\end{split}
\end{equation}
\begin{equation}
\label{eq: update of composition E}
\begin{split}
\Q^{new}_{TF} = \frac{1}{M} \sum_{s=1}^{M} \alpha_s \softmax_2 \big(\beta \Q_{TF} \K_{TF,s}^T \big) \V_{TF},
\end{split}
\end{equation}
where $\Q_{TF}$, $\K_{TF,s}$ and $\Vb_{TF}$ directly follows the definition in (\ref{eq: transformer attention})
and the degree and direction of $s$-th composition can be controlled for each concept individually by setting the scalar $\alpha_s$, with $\alpha_s<0$ for concept negation \cite{du2020compositional}.
Note that this is exactly a linear combination of transformer cross-attention outputs from different contexts with $\beta=1/\sqrt{d_H}$. We refer to this process as the Energy-based Composition of Queries (EBCQ).

This update rule implies that the compositional energy can be minimized implicitly through modifications to the forward path of EBCA, thereby guiding the integration of semantic information conveyed by editorial prompts. Notably, this is a significant departure from the existing energy-based compositional methods \cite{du2020compositional, nie2021controllable, du2020improved}. Specifically, no training or fine-tuning is required. Instead, cross-attention outputs of the main and editorial prompts are simply obtained in parallel, averaged, and propagated to the next layer. Moreover, the introduction of EBMs in cross-attention space is orthogonal to \cite{liu2022compositional}, which conducts compositional generation by treating a pre-trained diffusion model $\epsilonb_{\theta^*}$ itself as an implicitly parameterized EBM. 




\section{Experimental Results}
\label{sec: results}
To verify our claim of energy minimization via modified cross-attention, we conduct various experiments in text-guided image generation to verify semantic alignment, namely (1) multi-concept generation~\cite{liu2022compositional, feng2022training}, (2) text-guided image inpainting~\cite{lugmayr2022repaint, xie2022smartbrush}, and (3) compositional generation~\cite{hertz2022prompt, brack2023sega, parmar2023zero} which includes real and synthetic image editing. In this work, while the first and third applications address similar challenges, we categorize them separately based on the implementation derived from the energy-based interpretation. Specifically, the Energy-based Bayesian Context Update (EBCU) is applied to every task, and the Energy-based Composition of Queries (EBCQ) is additionally leveraged in the third task. 
Note that all applications have been done without additional training. 

\textbf{Setup.} The proposed framework can be widely mounted into many state-of-the-art text-to-image DMs due to its unique functionality of context updates and cross-attention map compositions. Here, we verify the effectiveness of energy-based cross-attention with Stable Diffusion (SD)~\cite{rombach2022high}. The SD is an LDM that is pre-trained on a subset of the large-scale image-language pair dataset, LAION-5B~\cite{schuhmann2022laion} followed by the subsequent fine-tuning on the LAION-aesthetic dataset. For the text embedding, we use a pre-trained CLIP~\cite{radford2021learning} model following the Imagen~\cite{saharia2022photorealistic}. The pre-trained SD is under the creativeML OpenRAIL license. Detailed experimental settings are provided in the appendix.


\begin{figure}[t]
    \centering
    \includegraphics[width=0.98\linewidth]{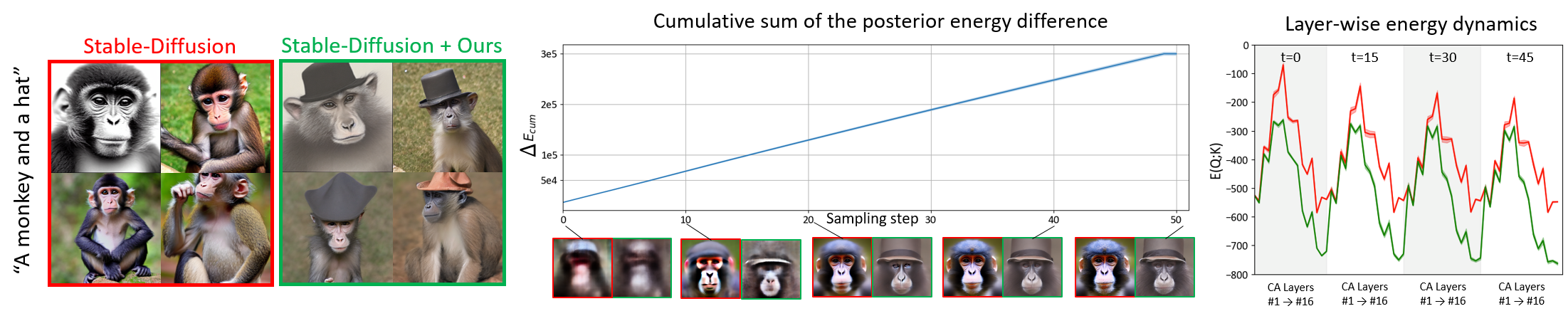}
    \vspace{-0.2cm}
    \caption{{Energy analysis for multi-concept image generation. The right graph displays $E(Q; K)$ across 16 cross-attention layers within each sampling step. The red and green lines correspond to the Stable Diffusion and ours, respectively. Mean and standard deviation are calculated and displayed. The bottom plot shows the cumulative sum of the posterior energy difference between Stable Diffusion and the proposed method, and intermediate denoised estimates are displayed together.}}
        \vspace{-0.5cm}
    \label{fig:energy_prog}
\end{figure}

\subsection{Analysis on Energy during the sampling} 
We perform a comprehensive analysis on \eqref{eq: E(Q|K)} and \eqref{eq: E(K)} during the forward path through the modified cross-attention, which offers insights into the energy behavior for real applications. Specifically, we examine the energy dynamics involved in the multi-concept image generation that is straightforward and could be readily applied to other applications with minimal effort.
We record computed energy values along layers and sampling steps 30 times with a fixed text prompt and then display them in Figure~\ref{fig:energy_prog} with the generated samples, where red and green lines denote the energy involved with the Stable-Diffusion (SD) and the proposed method, respectively. For each sampling step block that is alternately shaded, the energy values are plotted through 16 cross-attention layers. 

Across all layers and sampling steps, the energy associated with the proposed method remains consistently lower than that of the SD. This is in line with the semantic alignment between intermediate denoised estimates and the given text prompt. In both cases of the SD and the proposed method, $E(\Q; \K)$ decreases over sampling steps. This implies that the iterative refinement of the updated query carried over to subsequent sampling steps, resulting in a better match to the given context.
Note that the proposed method even achieves lower energy with the EBCU. 
More analyses are provided in the appendix \ref{sec: analysis} including the ablation study.

\subsection{Multi-Concept Generation}
We empirically demonstrate that the proposed framework alleviates the catastrophic neglect and attribute binding problems defined in the existing literature~\cite{chefer2023attend}. As shown in Figures \ref{fig:energy_prog} and \ref{fig:multiconcept}, the EBCU effectively mitigates these problems by promoting attention to all relevant concepts in the prompt. Specifically, the regularization term introduced by the prior energy prevents a single context from dominating the attention, while the attention term updates the context vectors in the direction of input queries, improving semantic alignment and facilitating the retrieval of related information.

\begin{figure}
    \centering
    \includegraphics[width=\linewidth]{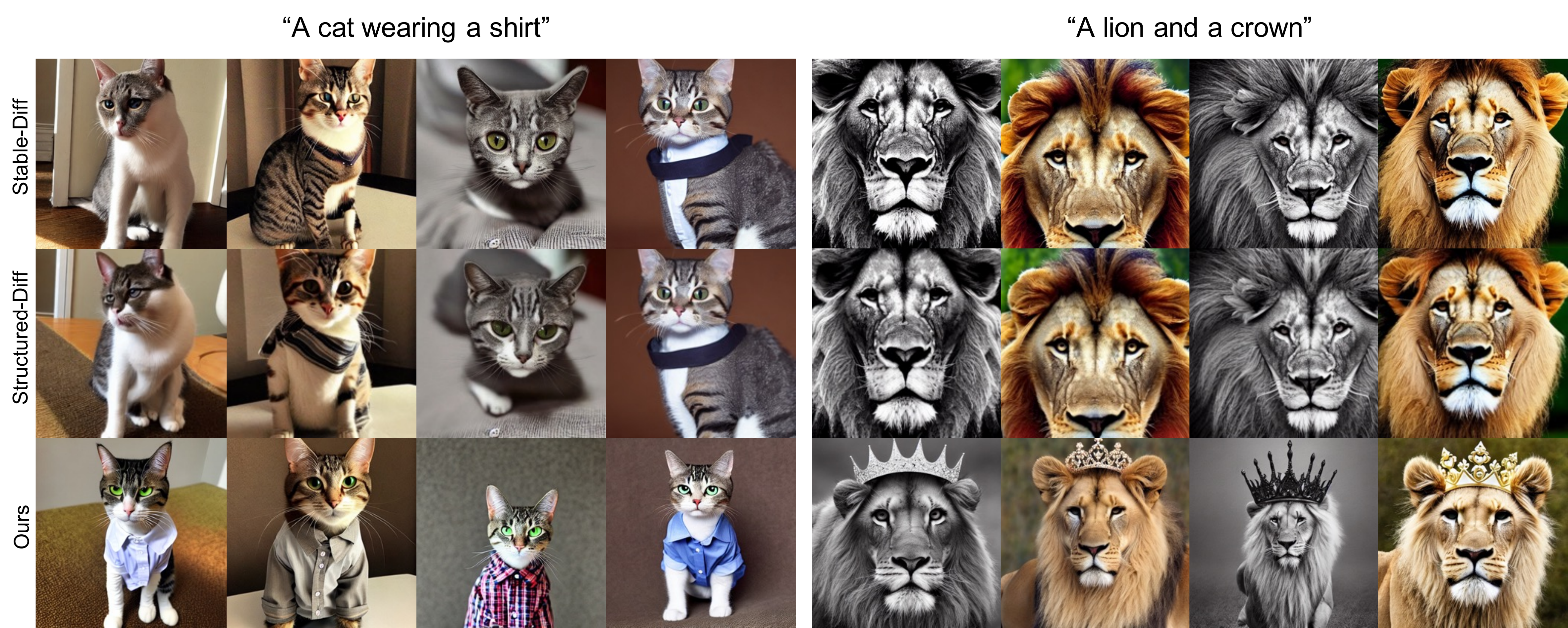}
    \vspace{-0.5cm}
    \caption{Multiconcept generation comparison. Each row indicates the Stable Diffusion, the Structured Diffusion, and the proposed method applied to the Stable Diffusion, respectively. Generated samples in the same column used the same text prompt and the same random seed.}
    \label{fig:multiconcept}
\end{figure}


\subsection{Text-guided Image Inpainting}
\begin{figure}
    \centering
    \includegraphics[width=\linewidth]{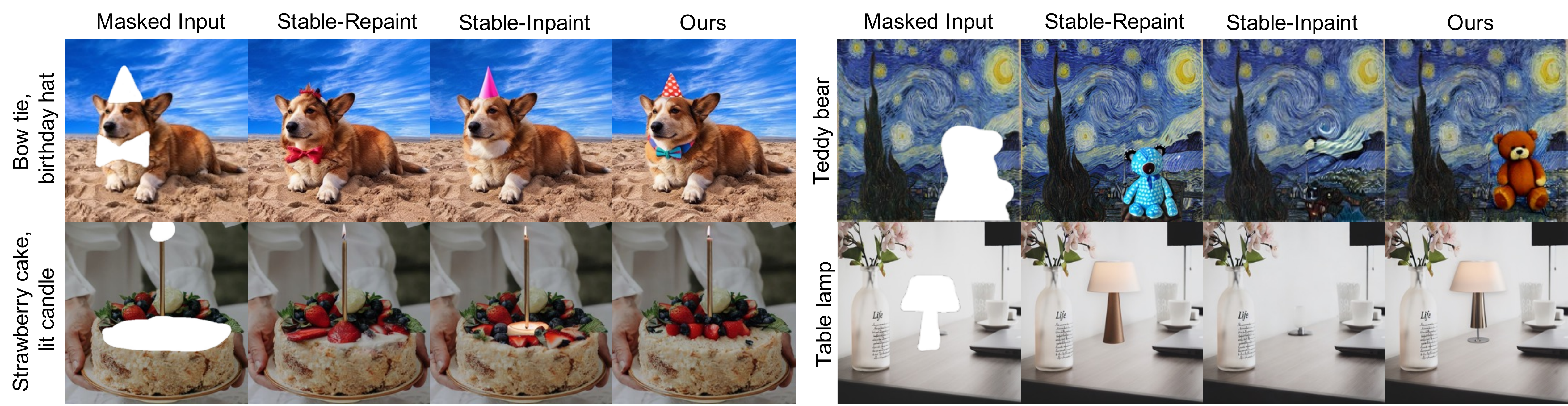}
    \vspace{-0.5cm}
    \caption{Text-guided inpainting comparison. The masked region is conditionally generated by given text-prompt positioned on the left-end of each row.}
        \vspace{-0.5cm}
    \label{fig:inpaint}
\end{figure}
In addition, we have evaluated the efficacy of the proposed energy-based EBCU on a text-guided inpainting task. Although existing state-of-the-art DMs such as DALLE and SD can be employed for inpainting by exploiting the ground-truth unmasked area \cite{lugmayr2022repaint}, they usually require computationally expensive fine-tuning and tailored data augmentation strategies \cite{lugmayr2022repaint, xie2022smartbrush}. In contrast, as shown in Figure \ref{fig:inpaint}, our proposed energy-based framework significantly enhances the quality of inpainting without any fine-tuning. Specifically, we incorporate the energy-based cross-attention into the Stable-Repaint (SR) and Stable-Inpaint (SI) models, which can struggle to inpaint multiple objects (e.g., \texttt{birthday hat} and \texttt{bow tie}) or unlikely combinations of foreground and background objects (e.g., Teddy bear on the Starry Night painting). In contrast, the proposed approach accurately fills in semantically relevant objects within the target mask region.

\subsection{Compositional Generation}
\begin{figure}
    \centering
    \includegraphics[width=\linewidth]{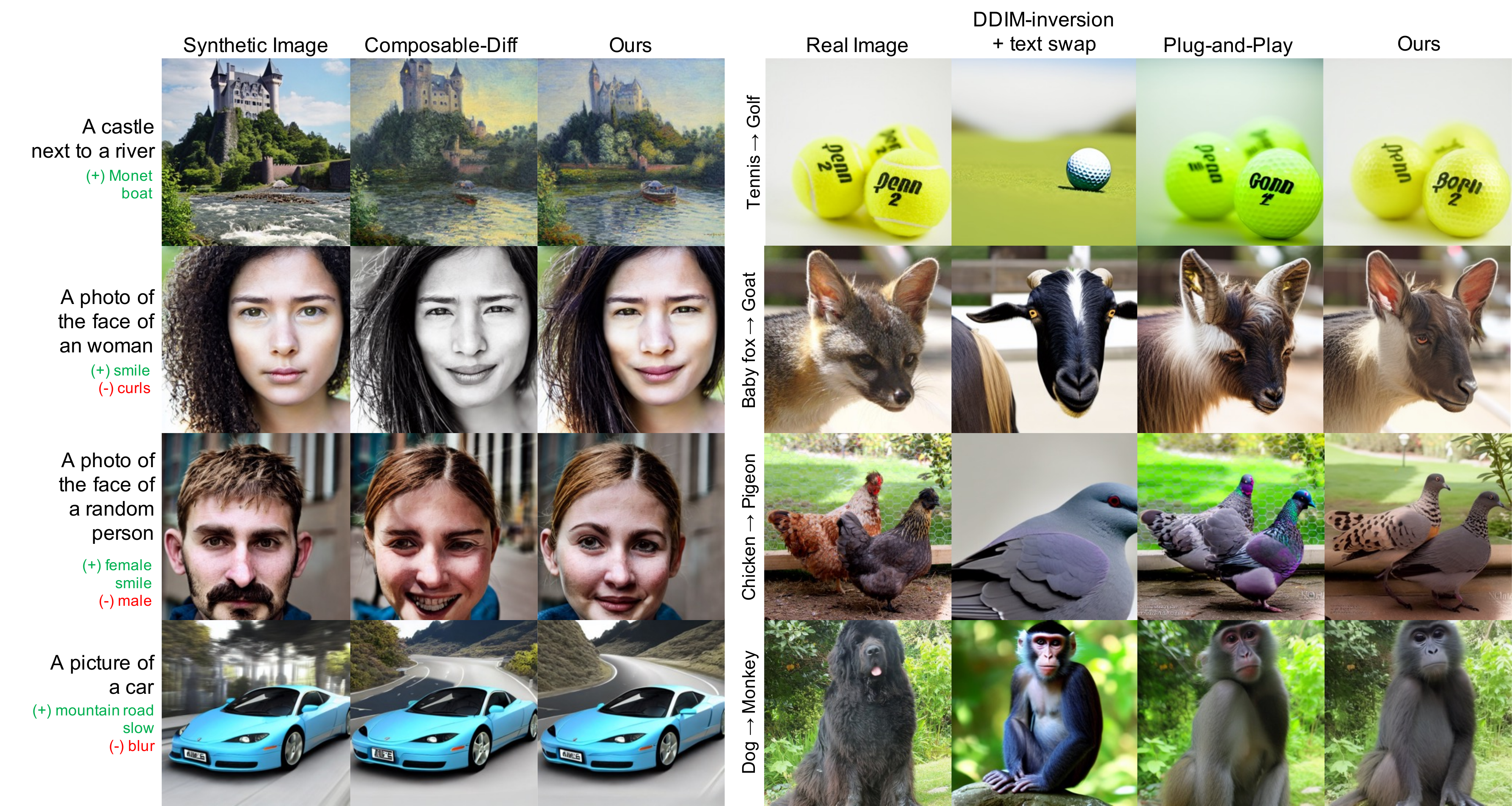}
    \vspace{-0.5cm}
    \caption{Image editing comparison. The left three columns show examples of synthetic image editing, while the right four columns show examples of real image editing. To ensure a fair comparison, the same editing prompt, positioned on the left-end of each row, is given to all methods being compared.}
    \label{fig:composition}
\end{figure}

\begin{figure}
    \centering
    \includegraphics[width=\linewidth]{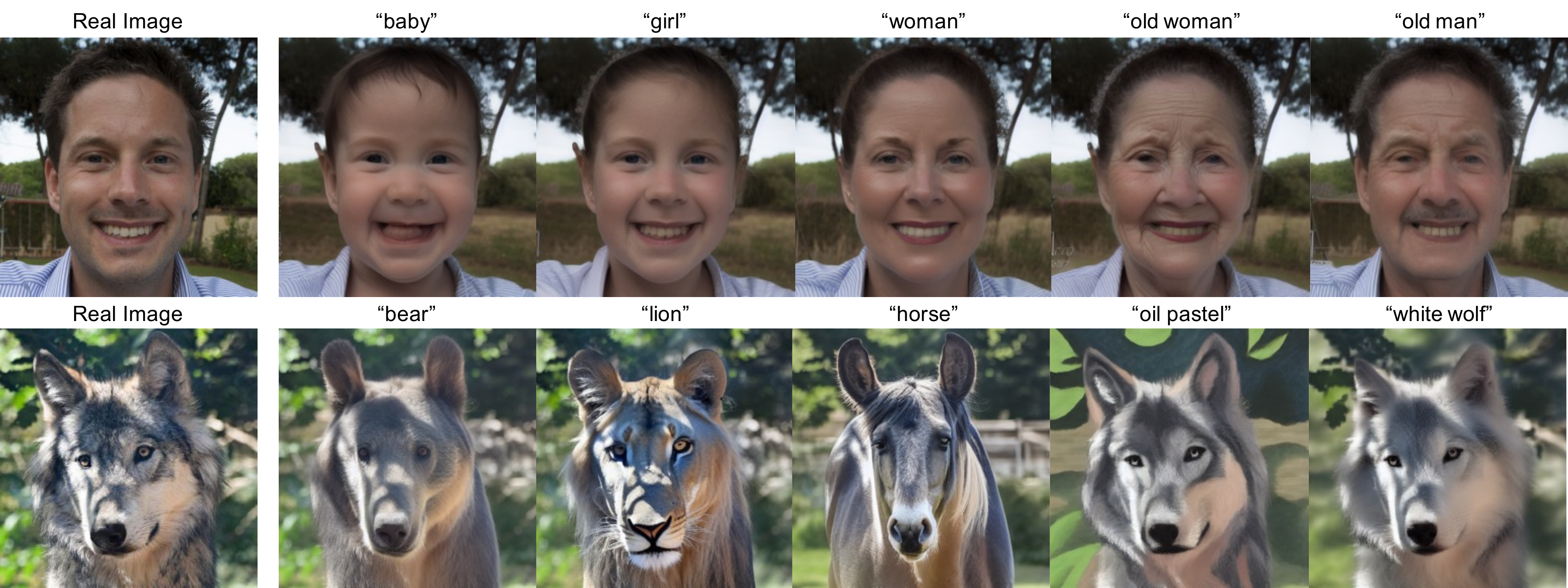}
    \vspace{-0.5cm}
    \caption{Real image editing with multiple editing prompts. Given a real image at the left-end of each row, the proposed method allows robust image editing with different prompts over each sample.}
    \label{fig:real_edit}
\end{figure}

We demonstrate that the proposed framework improves the controllability and compositionality of DMs, which is a significant challenge for generative models. To assess this, we split the task into synthetic and real image editing.

\textbf{Synthetic image editing.} Text-to-image DMs provide impressive results, but it is difficult to generate images perfectly aligned with user intentions~\cite{brack2023sega}. Although modifying prompts can guide generation, it can also introduce unintended changes in the generated content. Our results, shown in Figure 5, demonstrate that the proposed method can edit image contents while maintaining the original-prompt-related identity of the generated images thanks to the EBCU, which continuously harmonizes the latent representations and text prompts. While composable-diff \cite{liu2022compositional} can generate compositions, they often fail to preserve the original properties of images, such as the color in the second row, or fail to compose the desired concept, such as the boat in the first row.

\textbf{Real image editing.} We demonstrate that the proposed framework can also edit real images, which is especially challenging as existing methods often dramatically alter input content or introduce unexpected variations. To do this, we integrate our framework with DDIM inversion \cite{song2020denoising, mokady2022null}, which inverts images with meaningful text prompts into the domain of pre-trained diffusion models. First, we use the image captioning network BLIP \cite{li2022blip} to automatically caption the interested image, following \cite{parmar2023zero}. Next, we obtain the inverted noise latents of the input image using Diffusion Pivotal Inversion \cite{mokady2022null}. Then, we apply EBCQ and EBCU while denoising the inverted latents. The optimized unconditional textual embedding vector is also targeted for additional EBCUs. The results in Figure \ref{fig:composition} demonstrate that our method achieves better editing performance by avoiding undesired changes.

\textbf{Quantitative comparisons.} {We further conducted a comparative analysis of the proposed framework against several state-of-the-art diffusion-based image editing methods \cite{meng2021sdedit, hertz2022prompt, mokady2022null, parmar2023zero}. For our evaluations, we focus on two image-to-image translation tasks. The first one is an animal transition task, which translates (1) cat $\rightarrow$ dog, (2) horse $\rightarrow$ zebra, and (3) adding glasses to cat input images (cat $\rightarrow$ cat with glasses). The second one is a human transition task, which translates (1) woman $\rightarrow$ man, (2) woman $\rightarrow$ woman with glasses, and (3) woman $\rightarrow$ man with glasses. Source images are retrieved from the LAION 5B dataset \cite{schuhmann2022laion} and CelebA-HQ \cite{huang2018introvae}. Motivated by \cite{tumanyan2022plug, parmar2023zero}, we measure CLIP Accuracy and DINO-ViT structure distance. Experimental details are provided in the appendix \ref{sec: analysis} including explanations of baselines, datasets, metrics, and hyperparameter configurations.}

\begin{table}[H]
\caption{(Animal transition) Comparison to state-of-the-art diffusion-based editing methods. Dist for DINO-ViT Structure distance. Baseline results are from \cite{parmar2023zero}.}  
\label{table: comparison_animal}
\centering
\resizebox{0.99\textwidth}{!}{
\begin{tabular}{c c c c c c c}
\toprule
\multicolumn{1}{c}{\multirow{2}{*}{\textbf{Method}}} & \multicolumn{2}{c}{(\textbf{a}) \textbf{Cat} $\rightarrow$ \textbf{Dog}} & \multicolumn{2}{c}{(\textbf{b}) \textbf{Horse} $\rightarrow$ \textbf{Zebra}} & \multicolumn{2}{c}{(\textbf{c}) \textbf{Cat} $\rightarrow$ \textbf{Cat w/ glasses}} \\
\cmidrule{2-3} \cmidrule{4-5} \cmidrule{6-7}
\multicolumn{1}{c}{} & CLIP Acc ($\uparrow$) & Dist ($\downarrow$) & CLIP Acc ($\uparrow$) & Dist ($\downarrow$) & CLIP Acc ($\uparrow$) & Dist ($\downarrow$) \\
\midrule
{SDEdit \cite{meng2021sdedit} + word swap} & {71.2$\%$} & {0.081} & {92.2$\%$} & {0.105} & {34.0$\%$} & {0.082} \\
{DDIM + word swap} & {72.0$\%$} & {0.087} & {\textbf{94.0}$\%$} & {0.123} & {37.6$\%$} & {0.085} \\
{prompt-to-prompt \cite{hertz2022prompt}} & {66.0$\%$} & {0.080} & {18.4$\%$} & {0.095} & {69.6$\%$} & {0.081} \\
{pix2pix-zero \cite{parmar2023zero}} & {92.4$\%$} & {0.044} & {75.2$\%$} & {0.066} & {71.2$\%$} & {\textbf{0.028}} \\
{Stable Diffusion + ours} & {\textbf{93.7}$\%$} & {\textbf{0.040}} & {90.4$\%$} & {\textbf{0.061}} & {\textbf{81.1}$\%$} & {0.052} \\
\bottomrule
\end{tabular}
}
\end{table}

\vspace{-0.2cm}
\begin{figure}[htbp]
    \centering
    \includegraphics[width=0.95\linewidth]{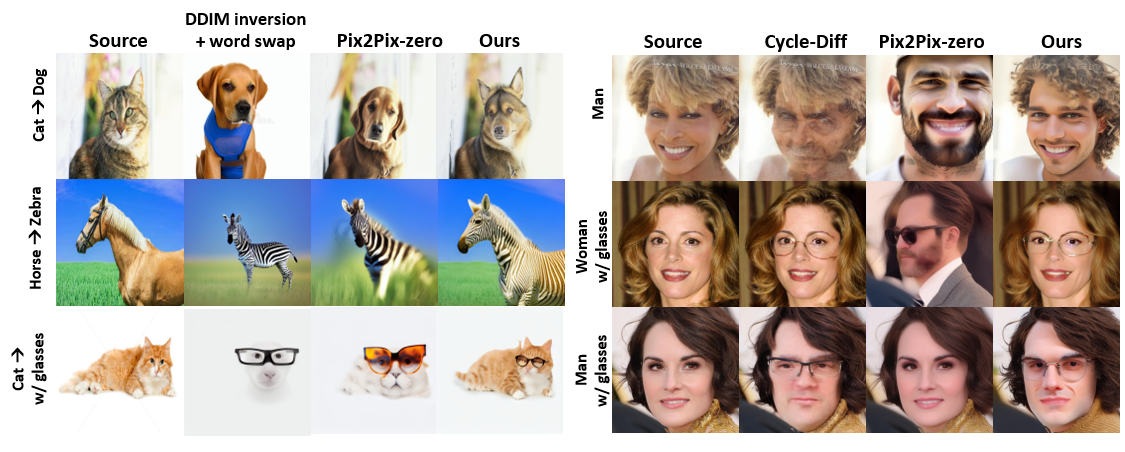}
    \caption{Real image editing on (\textbf{a}) animal transition and (\textbf{b}) human transition tasks.}
    \label{fig:animal_human_task}
\end{figure}

{Table \ref{table: comparison_animal} and \ref{table: comparison_human} in the appendix show that the proposed energy-based framework gets a high CLIP-Acc while having low Structure Dist. It implies that the proposed framework can perform the best edit while still retaining the structure of the original input image. While DDIM + word swap records remarkably high CLIP-Acc in horse $\rightarrow$ zebra task, Figure~\ref{fig:animal_human_task} shows that such improvements are based on unintended changes in the overall structure. More visualizations are provided in the appendix \ref{sec: analysis}.}


\section{Conclusion, Limitations and Societal Impacts}
\textbf{Conclusion.} In this work, we formulated the cross-attention with an energy perspective and proposed the EBCU, a modified cross-attention that could implicitly minimize the energy in the latent space. Furthermore, we proposed EBCQ which is inspired by energy-based formulation of multiple text composition. The proposed method is versatile as shown by multiple applications, theory-grounded, easy to implement, and computationally almost free.

\textbf{Limitations and Societal Impacts.} The framework presented in this study generates images based on user intentions, which raises concerns regarding potential misuse for creating deepfakes or other forms of disinformation. It is crucial to ensure that these methods are implemented ethically and regulated appropriately. Additionally, while the framework performs well across various tasks, it requires pre-trained deep models, rendering it challenging to apply to out-of-domain datasets, such as medical images.

\subsubsection*{Acknowledgments}
This research was supported by the KAIST Key Research Institute (Interdisciplinary Research Group) Project, National Research Foundation of Korea under Grant NRF-2020R1A2B5B03001980, Field-oriented Technology Development Project for Customs Administration through National Research Foundation of Korea(NRF) funded by the Ministry of Science \& ICT and Korea Customs Service(**NRF-2021M3I1A1097938**), Korea Medical Device Development Fund grant funded by the Korea government (the Ministry of Science and ICT, the Ministry of Trade, Industry and Energy, the Ministry of Health \& Welfare, the Ministry of Food and Drug Safety) (Project Number: 1711137899, KMDF\_PR\_20200901\_0015), Institute of Information \& communications Technology Planning \& Evaluation (IITP) grant funded by the Korea government(MSIT)  
(No.2019-0-00075, Artificial Intelligence Graduate School Program(KAIST)), Institute of Information \& communications Technology Planning \& Evaluation (IITP) grant funded by the Korea government(MSIT) (No.2021-0-02068, Artificial Intelligence Innovation Hub), Institute of Information \& communications Technology Planning \& Evaluation (IITP) grant funded by the Korea government (MSIT) (No. RS-2023-00233251, System3 reinforcement learning with high-level brain functions), and the National Research Foundation of Korea (NRF) grant funded by the Korea government (MSIT) (NRF-2019M3E5D2A01066267).

\small
\bibliographystyle{abbrv}
\bibliography{neurips_2023}

\begin{thebibliography}{10}

\bibitem{brack2023sega}
M.~Brack, F.~Friedrich, D.~Hintersdorf, L.~Struppek, P.~Schramowski, and
  K.~Kersting.
\newblock Sega: Instructing diffusion using semantic dimensions.
\newblock {\em arXiv preprint arXiv:2301.12247}, 2023.

\bibitem{brown2020language}
T.~Brown, B.~Mann, N.~Ryder, M.~Subbiah, J.~D. Kaplan, P.~Dhariwal,
  A.~Neelakantan, P.~Shyam, G.~Sastry, A.~Askell, et~al.
\newblock Language models are few-shot learners.
\newblock {\em Advances in neural information processing systems},
  33:1877--1901, 2020.

\bibitem{caron2021emerging}
M.~Caron, H.~Touvron, I.~Misra, H.~J{\'e}gou, J.~Mairal, P.~Bojanowski, and
  A.~Joulin.
\newblock Emerging properties in self-supervised vision transformers.
\newblock In {\em Proceedings of the IEEE/CVF international conference on
  computer vision}, pages 9650--9660, 2021.

\bibitem{chefer2023attend}
H.~Chefer, Y.~Alaluf, Y.~Vinker, L.~Wolf, and D.~Cohen-Or.
\newblock Attend-and-excite: Attention-based semantic guidance for
  text-to-image diffusion models.
\newblock {\em arXiv preprint arXiv:2301.13826}, 2023.

\bibitem{choi2020stargan}
Y.~Choi, Y.~Uh, J.~Yoo, and J.-W. Ha.
\newblock Stargan v2: Diverse image synthesis for multiple domains.
\newblock In {\em Proceedings of the IEEE/CVF conference on computer vision and
  pattern recognition}, pages 8188--8197, 2020.

\bibitem{demircigil2017model}
M.~Demircigil, J.~Heusel, M.~L{\"o}we, S.~Upgang, and F.~Vermet.
\newblock On a model of associative memory with huge storage capacity.
\newblock {\em Journal of Statistical Physics}, 168:288--299, 2017.

\bibitem{deng2009imagenet}
J.~Deng, W.~Dong, R.~Socher, L.-J. Li, K.~Li, and L.~Fei-Fei.
\newblock Imagenet: A large-scale hierarchical image database.
\newblock In {\em 2009 IEEE conference on computer vision and pattern
  recognition}, pages 248--255. Ieee, 2009.

\bibitem{du2020compositional}
Y.~Du, S.~Li, and I.~Mordatch.
\newblock Compositional visual generation with energy based models.
\newblock {\em Advances in Neural Information Processing Systems},
  33:6637--6647, 2020.

\bibitem{du2020improved}
Y.~Du, S.~Li, J.~Tenenbaum, and I.~Mordatch.
\newblock Improved contrastive divergence training of energy based models.
\newblock {\em arXiv preprint arXiv:2012.01316}, 2020.

\bibitem{du2019implicit}
Y.~Du and I.~Mordatch.
\newblock Implicit generation and generalization in energy-based models.
\newblock {\em arXiv preprint arXiv:1903.08689}, 2019.

\bibitem{feng2022training}
W.~Feng, X.~He, T.-J. Fu, V.~Jampani, A.~Akula, P.~Narayana, S.~Basu, X.~E.
  Wang, and W.~Y. Wang.
\newblock Training-free structured diffusion guidance for compositional
  text-to-image synthesis.
\newblock {\em arXiv preprint arXiv:2212.05032}, 2022.

\bibitem{hertz2022prompt}
A.~Hertz, R.~Mokady, J.~Tenenbaum, K.~Aberman, Y.~Pritch, and D.~Cohen-Or.
\newblock Prompt-to-prompt image editing with cross attention control.
\newblock {\em arXiv preprint arXiv:2208.01626}, 2022.

\bibitem{ho2020denoising}
J.~Ho, A.~Jain, and P.~Abbeel.
\newblock Denoising diffusion probabilistic models.
\newblock {\em Advances in Neural Information Processing Systems},
  33:6840--6851, 2020.

\bibitem{hoover2023energy}
B.~Hoover, Y.~Liang, B.~Pham, R.~Panda, H.~Strobelt, D.~H. Chau, M.~J. Zaki,
  and D.~Krotov.
\newblock Energy transformer.
\newblock {\em arXiv preprint arXiv:2302.07253}, 2023.

\bibitem{huang2018introvae}
H.~Huang, R.~He, Z.~Sun, T.~Tan, et~al.
\newblock Introvae: Introspective variational autoencoders for photographic
  image synthesis.
\newblock {\em Advances in neural information processing systems}, 31, 2018.

\bibitem{karras2022elucidating}
T.~Karras, M.~Aittala, T.~Aila, and S.~Laine.
\newblock Elucidating the design space of diffusion-based generative models.
\newblock In {\em Proc. NeurIPS}, 2022.

\bibitem{karras2019style}
T.~Karras, S.~Laine, and T.~Aila.
\newblock A style-based generator architecture for generative adversarial
  networks.
\newblock In {\em Proceedings of the IEEE/CVF conference on computer vision and
  pattern recognition}, pages 4401--4410, 2019.

\bibitem{kingma2021variational}
D.~P. Kingma, T.~Salimans, B.~Poole, and J.~Ho.
\newblock Variational diffusion models.
\newblock {\em arXiv preprint arXiv:2107.00630}, 2021.

\bibitem{krotov2018dense}
D.~Krotov and J.~Hopfield.
\newblock Dense associative memory is robust to adversarial inputs.
\newblock {\em Neural computation}, 30(12):3151--3167, 2018.

\bibitem{li2022blip}
J.~Li, D.~Li, C.~Xiong, and S.~Hoi.
\newblock Blip: Bootstrapping language-image pre-training for unified
  vision-language understanding and generation.
\newblock In {\em International Conference on Machine Learning}, pages
  12888--12900. PMLR, 2022.

\bibitem{liu2022pseudo}
L.~Liu, Y.~Ren, Z.~Lin, and Z.~Zhao.
\newblock Pseudo numerical methods for diffusion models on manifolds.
\newblock {\em arXiv preprint arXiv:2202.09778}, 2022.

\bibitem{liu2022compositional}
N.~Liu, S.~Li, Y.~Du, A.~Torralba, and J.~B. Tenenbaum.
\newblock Compositional visual generation with composable diffusion models.
\newblock In {\em Computer Vision--ECCV 2022: 17th European Conference, Tel
  Aviv, Israel, October 23--27, 2022, Proceedings, Part XVII}, pages 423--439.
  Springer, 2022.

\bibitem{lugmayr2022repaint}
A.~Lugmayr, M.~Danelljan, A.~Romero, F.~Yu, R.~Timofte, and L.~Van~Gool.
\newblock Repaint: Inpainting using denoising diffusion probabilistic models.
\newblock In {\em Proceedings of the IEEE/CVF Conference on Computer Vision and
  Pattern Recognition}, pages 11461--11471, 2022.

\bibitem{meng2021sdedit}
C.~Meng, Y.~Song, J.~Song, J.~Wu, J.-Y. Zhu, and S.~Ermon.
\newblock Sdedit: Image synthesis and editing with stochastic differential
  equations.
\newblock {\em arXiv preprint arXiv:2108.01073}, 2021.

\bibitem{mokady2022null}
R.~Mokady, A.~Hertz, K.~Aberman, Y.~Pritch, and D.~Cohen-Or.
\newblock Null-text inversion for editing real images using guided diffusion
  models.
\newblock {\em arXiv preprint arXiv:2211.09794}, 2022.

\bibitem{nie2021controllable}
W.~Nie, A.~Vahdat, and A.~Anandkumar.
\newblock Controllable and compositional generation with latent-space
  energy-based models.
\newblock {\em Advances in Neural Information Processing Systems},
  34:13497--13510, 2021.

\bibitem{nijkamp2019learning}
E.~Nijkamp, M.~Hill, S.-C. Zhu, and Y.~N. Wu.
\newblock Learning non-convergent non-persistent short-run mcmc toward
  energy-based model.
\newblock {\em Advances in Neural Information Processing Systems}, 32, 2019.

\bibitem{parmar2023zero}
G.~Parmar, K.~K. Singh, R.~Zhang, Y.~Li, J.~Lu, and J.-Y. Zhu.
\newblock Zero-shot image-to-image translation.
\newblock {\em arXiv preprint arXiv:2302.03027}, 2023.

\bibitem{radford2021learning}
A.~Radford, J.~W. Kim, C.~Hallacy, A.~Ramesh, G.~Goh, S.~Agarwal, G.~Sastry,
  A.~Askell, P.~Mishkin, J.~Clark, et~al.
\newblock Learning transferable visual models from natural language
  supervision.
\newblock In {\em International conference on machine learning}, pages
  8748--8763. PMLR, 2021.

\bibitem{ramsauer2020hopfield}
H.~Ramsauer, B.~Sch{\"a}fl, J.~Lehner, P.~Seidl, M.~Widrich, T.~Adler,
  L.~Gruber, M.~Holzleitner, M.~Pavlovi{\'c}, G.~K. Sandve, et~al.
\newblock Hopfield networks is all you need.
\newblock {\em arXiv preprint arXiv:2008.02217}, 2020.

\bibitem{rombach2022high}
R.~Rombach, A.~Blattmann, D.~Lorenz, P.~Esser, and B.~Ommer.
\newblock High-resolution image synthesis with latent diffusion models.
\newblock In {\em Proceedings of the IEEE/CVF Conference on Computer Vision and
  Pattern Recognition}, pages 10684--10695, 2022.

\bibitem{ronneberger2015u}
O.~Ronneberger, P.~Fischer, and T.~Brox.
\newblock U-net: Convolutional networks for biomedical image segmentation.
\newblock In {\em Medical Image Computing and Computer-Assisted
  Intervention--MICCAI 2015: 18th International Conference, Munich, Germany,
  October 5-9, 2015, Proceedings, Part III 18}, pages 234--241. Springer, 2015.

\bibitem{saharia2022photorealistic}
C.~Saharia, W.~Chan, S.~Saxena, L.~Li, J.~Whang, E.~L. Denton, K.~Ghasemipour,
  R.~Gontijo~Lopes, B.~Karagol~Ayan, T.~Salimans, et~al.
\newblock Photorealistic text-to-image diffusion models with deep language
  understanding.
\newblock {\em Advances in Neural Information Processing Systems},
  35:36479--36494, 2022.

\bibitem{schuhmann2022laion}
C.~Schuhmann, R.~Beaumont, R.~Vencu, C.~Gordon, R.~Wightman, M.~Cherti,
  T.~Coombes, A.~Katta, C.~Mullis, M.~Wortsman, et~al.
\newblock Laion-5b: An open large-scale dataset for training next generation
  image-text models.
\newblock {\em arXiv preprint arXiv:2210.08402}, 2022.

\bibitem{song2020denoising}
J.~Song, C.~Meng, and S.~Ermon.
\newblock Denoising diffusion implicit models.
\newblock {\em arXiv preprint arXiv:2010.02502}, 2020.

\bibitem{song2019generative}
Y.~Song and S.~Ermon.
\newblock Generative modeling by estimating gradients of the data distribution.
\newblock {\em Advances in neural information processing systems}, 32, 2019.

\bibitem{song2020score}
Y.~Song, J.~Sohl-Dickstein, D.~P. Kingma, A.~Kumar, S.~Ermon, and B.~Poole.
\newblock Score-based generative modeling through stochastic differential
  equations.
\newblock {\em arXiv preprint arXiv:2011.13456}, 2020.

\bibitem{tumanyan2022splicing}
N.~Tumanyan, O.~Bar-Tal, S.~Bagon, and T.~Dekel.
\newblock Splicing vit features for semantic appearance transfer.
\newblock In {\em Proceedings of the IEEE/CVF Conference on Computer Vision and
  Pattern Recognition}, pages 10748--10757, 2022.

\bibitem{tumanyan2022plug}
N.~Tumanyan, M.~Geyer, S.~Bagon, and T.~Dekel.
\newblock Plug-and-play diffusion features for text-driven image-to-image
  translation.
\newblock {\em arXiv preprint arXiv:2211.12572}, 2022.

\bibitem{wu2022unifying}
C.~H. Wu and F.~De~la Torre.
\newblock Unifying diffusion models' latent space, with applications to
  cyclediffusion and guidance.
\newblock {\em arXiv preprint arXiv:2210.05559}, 2022.

\bibitem{xie2022smartbrush}
S.~Xie, Z.~Zhang, Z.~Lin, T.~Hinz, and K.~Zhang.
\newblock Smartbrush: Text and shape guided object inpainting with diffusion
  model.
\newblock {\em arXiv preprint arXiv:2212.05034}, 2022.

\bibitem{ye2022geometry}
J.~C. Ye.
\newblock {\em Geometry of Deep Learning}.
\newblock Springer, 2022.

\bibitem{yuille2001concave}
A.~L. Yuille and A.~Rangarajan.
\newblock The concave-convex procedure (cccp).
\newblock {\em Advances in neural information processing systems}, 14, 2001.

\end{thebibliography}

\clearpage
\appendix
\section{Proof of Theorem 1}

In computing a derivative of a scalar, vector, or matrix with respect to a scalar, vector, or matrix, we should be consistent
with the notation. In this paper, we follow the denominator layout convention as described in \cite{ye2022geometry}.
To make the paper self-contained, we briefly introduce the denominator layout and associate calculus.

%
The main motivation for using the denominator layout is from the derivative
with respect to the matrix. More specifically, for a given scalar $c$ and a matrix $\Wb\in \Rd^{m\times n}$,  according to the denominator layout, we have
\begin{align}\label{eq:cW}
\frac{\partial c}{\partial \Wb}= \begin{bmatrix} \frac{\partial c}{\partial w_{11} } & \cdots &  \frac{\partial c}{\partial w_{1n}}  \\
 \vdots & \ddots & \vdots \\ \frac{\partial c}{\partial w_{m1} }& \cdots & \frac{\partial c}{\partial w_{mn} }\end{bmatrix} \in \Rd^{m\times n} .
\end{align}
Furthermore, this notation leads to the following familiar result:
\begin{align}\label{eq:ax}
\frac{\partial\ab^\top\xb}{\partial \xb} &= \frac{\partial\xb^\top\ab}{\partial \xb} = \ab  .
\end{align}
Accordingly, for a given scalar $c$ and a matrix $\Wb\in \Rd^{m\times n}$, we can show that 
\begin{align}
\frac{\partial c}{\partial \Wb} := \unvect\left(\frac{\partial c}{\partial \vect(\Wb)}\right) \in \Rd^{m\times n},
\end{align}
 in order to 
be consistent with  \eqref{eq:cW}, where $\vect()$ and $\unvect()$ refer to the vectorization operation and its reverse, respectively.
Under the denominator layout notation, for given vectors $\xb\in \Rd^m$ and $\yb\in \Rd^n$,  the derivative of a vector
with respect to a vector given by
\begin{align}\label{eq:vec2vec}
\frac{\partial \yb}{\partial \xb}= \begin{bmatrix} \frac{\partial y_1}{\partial x_{1} } & \cdots &  \frac{\partial y_n}{\partial x_1}  \\
 \vdots & \ddots & \vdots \\ \frac{\partial y_1}{\partial x_m }& \cdots & \frac{\partial y_n}{\partial x_{m} }\end{bmatrix} \in \Rd^{m\times n} .
\end{align}
Then, the chain rule can be specified as follows:
\begin{align}\label{eq:chain}
\frac{\partial c(\gb(\ub))}{\partial \xb} = \frac{\partial \ub}{\partial \xb}\frac{\partial \gb(\ub)}{\partial \ub}\frac{\partial c(\gb)}{\partial \gb}  .
\end{align}
Finally,  the following property of the Kronecker delta product is useful throughout the paper \cite{ye2022geometry}
\begin{align}
 \vect(\A \B \C)&=(\C^T\otimes \A)\vect(\B) \label{eq:ABC}\\
 (\A\otimes \B)^T&=\A^T\otimes \B^T
 \end{align}

Using this, we first prove the following  key lemmas.
\begin{lemma}\label{lem:key0}
For a given column vector $\xb\in \Rd^{N}$, we have
\begin{align*}
\frac{\partial \lse(\xb, \beta)}{\partial\xb}&=\softmax (\beta\xb)
\end{align*}
\end{lemma}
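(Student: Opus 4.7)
The plan is to prove this lemma by direct coordinate-wise differentiation of the definition of $\lse$, and then to assemble the partial derivatives into a column vector under the denominator-layout convention \eqref{eq:vec2vec}. Since the identity $\partial_\xb \lse(\xb,\beta) = \softmax(\beta\xb)$ is the standard smooth-maximum gradient, I would read the paper's definition as $\lse(\xb,\beta) = \beta^{-1}\log\sum_{i=1}^{N}\exp(\beta x_i)$ (the usual temperature-$\beta^{-1}$ log-sum-exp), which is the convention consistent with the claimed formula.

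First I would compute $\partial \lse(\xb,\beta)/\partial x_j$ via the chain rule: differentiating the outer logarithm produces $1/\sum_i \exp(\beta x_i)$, and differentiating each summand $\exp(\beta x_i)$ with respect to $x_j$ produces $\beta\exp(\beta x_j)\delta_{ij}$, which is nonzero only for $i=j$. The factor $\beta$ then cancels the prefactor $\beta^{-1}$, so
\[
\frac{\partial \lse(\xb,\beta)}{\partial x_j} = \frac{\exp(\beta x_j)}{\sum_{i=1}^{N}\exp(\beta x_i)},
\]
which is precisely the $j$-th entry of $\softmax(\beta\xb)$ by the paper's definition $\softmax(\vb)=\exp(\vb-\lse(\vb,1))$.

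Stacking these partials into a column vector, as prescribed by the denominator layout in \eqref{eq:vec2vec}, yields the claimed identity. There is essentially no obstacle: the computation is a one-line application of the chain rule. The only mild subtlety is notational, namely keeping the output shape consistent with the denominator layout and reconciling the prefactor $\beta^{-1}$ with the $\beta$ that descends from the exponent so that the two cancel cleanly, leaving the argument of $\softmax$ scaled by $\beta$ on the right-hand side.
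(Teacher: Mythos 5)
Your proof is correct and follows essentially the same route as the paper's: a direct coordinate-wise application of the chain rule, with the partials assembled into a column vector under the denominator layout. Your reading of $\lse(\xb,\beta)$ as $\beta^{-1}\log\sum_{i}\exp(\beta x_i)$ is also the convention the paper's own proof silently adopts (its stated definition omits the $\beta$ inside the exponential), so the cancellation of $\beta^{-1}$ against the $\beta$ descending from the exponent goes through exactly as you describe.
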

\begin{proof}
\begin{align*}
\frac{\partial \lse(\xb, \beta)}{\partial\xb}&=\beta^{-1}\frac{\partial\log \sum_{j=1}^{N} \exp (\beta x_j)}{\partial \xb}
=\begin{bmatrix}
\frac{\exp(\beta x_1)}{\sum_{j=1}^{N} \exp(\beta x_j) } \\
\vdots \\
\frac{\exp(\beta x_{P^2})}{\sum_{j=1}^{N} \exp(\beta x_j) } 
\end{bmatrix}
\\
&=\softmax (\beta\xb)
\end{align*}
Q.E.D.
\end{proof}

\begin{lemma}\label{lem:key0.5}
Let $\kb_i$ denote the $i$-th row vector of $\K \in \Rd^{N\times d}$. Then, we have
\begin{align*}
\frac{\partial \kb_i\kb_i^T}{\partial\K}=2\e_N^i(\e_N^i)^T\K
\end{align*}
where  $\e_N^i$ represents a $N$-dimensional column vector where only the $i$-th entry is 1, with all other entries set to zero.
\end{lemma}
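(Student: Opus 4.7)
The plan is a direct entrywise computation under the denominator layout, followed by a one-line rewrite into the compact matrix form asserted in the lemma. First I would unpack the scalar on the left-hand side: since $\kb_i$ is the $i$-th row of $\K$ regarded as a $1 \times d$ row vector, we have $\kb_i \kb_i^T = \sum_{j=1}^{d} K_{ij}^2$. By the denominator-layout definition in \eqref{eq:cW}, $\partial(\kb_i \kb_i^T)/\partial \K$ is an $N \times d$ matrix whose $(m,n)$ entry is $\partial \big( \sum_j K_{ij}^2 \big)/\partial K_{mn}$.

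Next I would perform the entrywise differentiation. The partial $\partial(\sum_j K_{ij}^2)/\partial K_{mn}$ vanishes whenever $m \neq i$, because $K_{mn}$ does not appear in the sum; and for $m = i$ it equals $2 K_{in}$. Consequently the resulting $N \times d$ derivative matrix has all zero rows except the $i$-th, and its $i$-th row is $2\kb_i$.

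Finally I would identify this structure with $2 \e_N^i (\e_N^i)^T \K$. The outer product $\e_N^i (\e_N^i)^T \in \Rd^{N \times N}$ is the rank-one matrix with a single $1$ at position $(i,i)$, so left-multiplication of $\K$ by it zeros out every row except the $i$-th, which is preserved as $(\e_N^i)^T \K = \kb_i$. Multiplying by $2$ then matches the entrywise calculation, completing the argument.

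I do not anticipate any real obstacle: the lemma is intentionally a bookkeeping identity, packaging a straightforward partial derivative into the matrix-form building block that will later be combined, via the chain rule \eqref{eq:chain} and Lemma \ref{lem:key0}, to differentiate the regularizer $\diag(\K\K^T)$ appearing in \eqref{eq: E(Q|K)} and the prior energy $\E(\K)$ in \eqref{eq: E(K)} en route to the gradient formula \eqref{eq: grad log posterior} in Theorem \ref{thm: MAP estimate}. The only minor care needed is to keep the row-vector versus column-vector conventions for $\kb_i$ consistent with the rest of the paper, so that the dimensions of $\e_N^i (\e_N^i)^T \K$ line up with those prescribed by \eqref{eq:cW}.
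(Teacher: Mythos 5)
Your proof is correct, but it takes a genuinely different (and more elementary) route than the paper's. You compute the derivative entrywise: writing $\kb_i\kb_i^T=\sum_{j}K_{ij}^2$ and reading off from the denominator-layout definition \eqref{eq:cW} that the $(m,n)$ entry of the derivative is $2K_{in}$ when $m=i$ and $0$ otherwise, then recognizing this single-nonzero-row matrix as $2\e_N^i(\e_N^i)^T\K$. The paper instead writes $\kb_i\kb_i^T=(\e_N^i)^T\K\K^T\e_N^i$, differentiates with respect to $\vect(\K^T)$ via the Kronecker identity \eqref{eq:ABC} and the chain rule, and then unvectorizes and transposes to land on the same expression. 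Your approach buys brevity and transparency for this particular scalar-of-matrix derivative, and it is essentially the same style the paper itself uses for the $\diag(\K\K^T)$ term in Lemma~\ref{lem:key2}; the paper's vectorization route buys uniformity with the harder computation in Lemma~\ref{lem:key}, where the quantity being differentiated is vector-valued and the entrywise bookkeeping would be messier. Your final caution about row- versus column-vector conventions is exactly the right thing to watch: under the paper's conventions $\kb_i$ is a row, $(\e_N^i)^T\K=\kb_i$, and the dimensions of $\e_N^i(\e_N^i)^T\K$ match the $N\times d$ shape prescribed by \eqref{eq:cW}, so no issue arises.
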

\begin{proof}
First,   note that the expression $\kb_i^T$ can be equivalently rewritten as $\K^T\e^i$.
Then,   we have
$$\kb_i\kb_i^T=(\e_N^i)^T\K\K^T\e_N^i$$
Furthermore, using \eqref{eq:ABC}, we have $\K^T\e_N^i=((\e_N^i)^T\otimes \Ib_d)\vect(\K^T)$ so that
$$\frac{\partial \K^T\e^i}{\partial \vect(\K^T)}=(\e_N^i \otimes \Ib_d)$$
where $\Ib_d$ denotes the $d\times d$ identity matrix.
Using the chain rule, we have
\begin{align*}
\frac{\partial \kb_i\kb_i^T}{\partial\vect(\K^T)}&=\frac{\partial\K^T\e^i}{\partial \vect(\K^T)}\frac{\partial \kb_i\kb_i^T}{\partial\K^T\e^i}=
2(\e_N^i \otimes \Ib_d) \K^T\e_N^i
\end{align*}
Thus, we have
\begin{align*}
\unvect\left(\frac{\partial \kb_i\kb_i^T}{\partial\vect(\K^T)}\right)&=2\unvect\left((\e_N^i \otimes \Ib_d) \K^T\e_N^i\right)
=2 \K^T\e_N^i(\e_N^i)^T
\end{align*}
where we again use \eqref{eq:ABC}.
Therefore, by taking the transpose, we have
$$\frac{\partial \kb_i\kb_i^T}{\partial\K}=2\e_N^i(\e_N^i)^T\K.$$

Q.E.D.
\end{proof}

\begin{lemma}\label{lem:key}
\begin{align}\label{eq:lem1}
\nabla_\K\lse(\Q \kb_i^T, \beta)=
\e_N^i(\softmax(\beta\Q \kb_i^T))^T \Q
\end{align}
\end{lemma}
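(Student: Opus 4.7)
The plan is to treat $\lse(\Q\kb_i^T,\beta)$ as a composition of (i) the affine map $\K \mapsto \Q\kb_i^T$ and (ii) the scalar-valued $\lse(\cdot,\beta)$ on $\Rb^{P^2}$, and then apply the chain rule in the denominator-layout convention exactly as in Lemma \ref{lem:key0.5}. The key bookkeeping step is to rewrite $\Q\kb_i^T$ as $\Q\K^T\e_N^i$ so that the dependence on $\K$ is explicit and linear; this allows us to vectorize and use the Kronecker product identity \eqref{eq:ABC} to differentiate.

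First I would set $\ub := \Q\kb_i^T = \Q\K^T\e_N^i \in \Rb^{P^2}$ and invoke Lemma \ref{lem:key0} to obtain
\begin{equation*}
\frac{\partial \lse(\ub,\beta)}{\partial \ub} = \softmax(\beta \ub) = \softmax(\beta \Q\kb_i^T).
\end{equation*}
Next, using \eqref{eq:ABC} with $A = \Q$, $X = \K^T$, $B = \e_N^i$, I would express
\begin{equation*}
\ub = \vect(\Q\K^T\e_N^i) = \bigl((\e_N^i)^T \otimes \Q\bigr)\,\vect(\K^T),
\end{equation*}
so that under the denominator layout
\begin{equation*}
\frac{\partial \ub}{\partial \vect(\K^T)} = \bigl((\e_N^i)^T \otimes \Q\bigr)^T = \e_N^i \otimes \Q^T.
\end{equation*}

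Then the chain rule \eqref{eq:chain} gives
\begin{equation*}
\frac{\partial \lse(\ub,\beta)}{\partial \vect(\K^T)} = \bigl(\e_N^i \otimes \Q^T\bigr)\,\softmax(\beta \Q\kb_i^T).
\end{equation*}
Applying \eqref{eq:ABC} in reverse, with $\softmax(\beta\Q\kb_i^T)$ viewed as a $P^2\times 1$ matrix, identifies the right-hand side as $\vect\bigl(\Q^T\softmax(\beta\Q\kb_i^T)(\e_N^i)^T\bigr)$. Unvectorizing yields
\begin{equation*}
\frac{\partial \lse(\Q\kb_i^T,\beta)}{\partial \K^T} = \Q^T\,\softmax(\beta \Q\kb_i^T)\,(\e_N^i)^T,
\end{equation*}
and taking the transpose produces the claimed identity $\nabla_\K\lse(\Q\kb_i^T,\beta) = \e_N^i\,\bigl(\softmax(\beta\Q\kb_i^T)\bigr)^T\Q$.

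The routine calculus is not the obstacle here; the only part that requires real care is keeping the Kronecker identity consistent with the denominator-layout convention (so the transpose in $\partial\ub/\partial\vect(\K^T)$ lands on the correct factor) and correctly re-assembling $(\e_N^i\otimes\Q^T)\cdot\softmax(\beta\Q\kb_i^T)$ back into matrix form before taking the final transpose. Once these two vectorization/unvectorization steps are handled symmetrically, the result drops out immediately.
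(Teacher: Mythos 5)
Your proof is correct and follows essentially the same route as the paper's own argument: rewriting $\Q\kb_i^T$ as $\Q\K^T\e_N^i$, vectorizing via \eqref{eq:ABC} to get $\partial\ub/\partial\vect(\K^T)=\e_N^i\otimes\Q^T$, applying Lemma~\ref{lem:key0} and the chain rule, then unvectorizing and transposing. There is nothing to add.
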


\begin{proof}
Note that
$$\Q\kb_i^T=\Q\K^T\e_N^i=\Q\K^T\e_N^i=((\e_N^i)^T\otimes \Q)\vect(\K^T).$$
Hence, using \eqref{eq:ax} we have
\begin{align}
\frac{\partial \Q\K^T\e^i}{\partial \vect(\K^T)}=  \e_N^i\otimes \Q^T
\end{align}
Furthermore, using Lemma~\ref{lem:key0} we have
\begin{align*}
\frac{\partial \lse(\Q \kb_i^T, \beta)}{\partial \vect(\K^T)}&=\frac{\partial \Q \kb_i^T}{\partial \vect(\K^T)}\frac{\partial \lse(\Q \kb_i^T, \beta)}{\partial \Q \kb_i^T}\\
&= (\e_N^i\otimes \Q^T) \softmax (\beta\Q \kb_i^T)
\end{align*}
leading to the following:
\begin{align*}
\frac{\partial \lse(\Q \kb_i^T, \beta)}{\partial \K^T}  &=\unvect\left(\frac{\partial \lse(\Q \kb_i^T, \beta)}{\partial \vect(\K^T)}\right) \\
&=\unvect\left((\e_N^i\otimes \Q^T) \softmax (\beta\Q \kb_i^T)\right) \\
&=  \Q^T \softmax (\beta\Q \kb_i^T) (\e_N^i)^T
\end{align*}
where we again use \eqref{eq:ABC}.
 Now, by taking the transpose, we can prove \eqref{eq:lem1}.
Q.E.D.
\end{proof}

\begin{lemma}\label{lem:key2}
\begin{eqnarray}
\frac{\partial \diag(\K \K^T)}{\partial \K} &=&2\K\\
\frac{\partial \log \sum_{i=1}^N \exp (\frac{1}{2} \kb_i \kb_i^T)}{\partial \K} &=&  
  \begin{bmatrix} [\softmax(\xb)]_1 & \cdots & 0 \\ \vdots & \ddots & \vdots \\ 0 & \cdots &  [\softmax(\xb)]_N \end{bmatrix}\K
\end{eqnarray}
where $\diag(\A)$ denotes the sum of the diagonal element, i.e. trace of $\A$, and $\xb_i$ is a column matrix whose $i$-th element is given by $\kb_i\kb_i^T/2$, and $[\cdot]_i$ dentoes the $i$-th element.
\end{lemma}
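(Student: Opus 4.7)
The plan is to obtain both identities by treating the scalar expression as a sum or composition of the building block $\kb_i \kb_i^T$, applying Lemma~\ref{lem:key0.5} to get $\partial(\kb_i \kb_i^T)/\partial \K = 2\e_N^i (\e_N^i)^T \K$, and then collecting the resulting rank-one outer products $\e_N^i (\e_N^i)^T$ to form either the identity matrix or a diagonal matrix of soft-max weights.

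\textbf{First identity.} Since $\diag(\K\K^T) = \sum_{i=1}^N \kb_i \kb_i^T$ by the trace convention used for $\diag$ throughout the paper, linearity of the derivative and Lemma~\ref{lem:key0.5} give
\begin{equation*}
\frac{\partial \diag(\K\K^T)}{\partial \K} = 2 \sum_{i=1}^N \e_N^i (\e_N^i)^T \K = 2 \Ib_N \K = 2\K,
\end{equation*}
where the last equality uses the completeness relation $\sum_i \e_N^i (\e_N^i)^T = \Ib_N$.

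\textbf{Second identity.} I would introduce the vector $\xb \in \Rd^N$ with components $x_i = \tfrac{1}{2}\kb_i\kb_i^T$, so that the scalar of interest is exactly $\lse(\xb,1)$. The denominator-layout chain rule combined with Lemma~\ref{lem:key0} (specialized to $\beta=1$) gives
\begin{equation*}
\frac{\partial \lse(\xb,1)}{\partial \K} = \sum_{i=1}^N [\softmax(\xb)]_i \, \frac{\partial x_i}{\partial \K},
\end{equation*}
and Lemma~\ref{lem:key0.5} applied to each summand yields $\partial x_i / \partial \K = \e_N^i (\e_N^i)^T \K$ (the factor of $\tfrac{1}{2}$ cancels the factor of $2$ coming from the lemma). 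Substituting, the matrix $\sum_i [\softmax(\xb)]_i \e_N^i (\e_N^i)^T$ is by construction the diagonal matrix whose $i$-th diagonal entry is $[\softmax(\xb)]_i$, and right-multiplying by $\K$ recovers the claimed block form.

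The only real obstacle is bookkeeping the denominator-layout conventions: one must verify that the chain rule places the scalar $[\softmax(\xb)]_i$ on the correct side so that the $\e_N^i(\e_N^i)^T$ factors can be pooled into a single diagonal matrix, and that the transposes inherited from Lemma~\ref{lem:key0.5} are consistent with those appearing in the update rule used downstream in Theorem~\ref{thm: MAP estimate}. Once this accounting is settled, both identities fall out in essentially one line each, and substituting them (together with Lemma~\ref{lem:key}) into the negative gradient of $\E(\Q;\K) + \E(\K)$ immediately reproduces the right-hand side of equation~\eqref{eq: grad log posterior}.
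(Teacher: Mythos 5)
Your proof is correct and follows essentially the same route as the paper: for the second identity both you and the paper introduce the vector $\xb$ with $x_i=\tfrac{1}{2}\kb_i\kb_i^T$ and combine Lemma~\ref{lem:key0} with Lemma~\ref{lem:key0.5} via the denominator-layout chain rule, pooling $\sum_i[\softmax(\xb)]_i\,\e_N^i(\e_N^i)^T$ into the diagonal matrix. The only cosmetic difference is in the first identity, where the paper differentiates $\sum_{i,j}K_{ij}^2$ entrywise while you decompose $\diag(\K\K^T)=\sum_i\kb_i\kb_i^T$ and invoke Lemma~\ref{lem:key0.5} together with the completeness relation $\sum_i\e_N^i(\e_N^i)^T=\Ib_N$; both are one-line computations yielding $2\K$.
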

\begin{proof}
First,
\begin{align*}
c=\diag(\K \K^T)=\sum_{i,j} K_{ij}^2
\end{align*}
where $K_{ij}$ denotes the $(i,j)$-th element.
Therefore, using the denominator layout in \eqref{eq:cW}, it is trivial to show that
\begin{align*}
\frac{\partial c}{\partial \K}=2\K .
\end{align*}
Second, let us construct a column vector $\xb$ whose $i$-th element is given by $x_i:=\kb_i\kb_i^T/2$.
Then, using Lemmas~\ref{lem:key0} and \ref{lem:key0.5} and the chain rule, we have
\begin{align*}
\frac{\partial \log \sum_{i=1}^N \exp (\frac{1}{2} \kb_i \kb_i^T)}{\partial \K} &=\sum_i \frac{\partial x_i}{\partial\K} \frac{\partial \lse(\xb, 1)}{\partial x_i} \\
&=  \sum_i \e_N^i(\e_N^i)^T\K[\softmax(\xb)]_i  \\
&=  \sum_i \e_N^i\kb_i[\softmax(\xb)]_i\\
&=  \begin{bmatrix} [\softmax(\xb)]_1 & \cdots & 0 \\ \vdots & \ddots & \vdots \\ 0 & \cdots &  [\softmax(\xb)]_N \end{bmatrix}\K
\end{align*}
This concludes the proof.
\end{proof}

\begin{customthm}{1}
For the energy functions 
\begin{equation}
    \E(\Q; \K)=\frac{\alpha}{2} \diag(\K \K^T) - \sum_{i=1}^N \lse(\Q \kb_i^T, \beta)
\end{equation}
and 
\begin{equation}
    E(\K)=\log \sum_{i=1}^N \exp (\frac{1}{2} \kb_i \kb_i^T),
\end{equation}
the gradient of the log posterior is given by:
\begin{equation}
\begin{split}
\nabla_\K \log p(\K \given[] \Q) &= \softmax_2 \big(\beta \K \Q^T \big)\Q   -  \bigg( \alpha\Ib + \D \Big( \softmax \Big(\frac{1}{2} \diag(\K \K^T) \Big) \Big) \bigg) \K,
\end{split}
\end{equation}
Then, by using the chain rule 
 the update rule of context vectors $\C$ is derived as follows:
\begin{equation}
\C_{n+1} = \C_n + \gamma  \bigg( \softmax_2 \big(\beta \K \Q^T \big) \Q  -  \bigg(\alpha\Ib + \D \Big( \softmax \Big(\frac{1}{2} \diag(\K\K^T ) \Big) \Big) \bigg)\K \bigg)\W_K^T,
\end{equation}
where $\gamma > 0$ is a step size, and $\D(\cdot)$ is a vector-to-diagonal-matrix operator. 
\end{customthm}

\begin{proof}
    Based on the Bayes' theorem, the gradient of the log posterior is derived as:
    \begin{equation}
        \nabla_\K \log p(\K \given[] \Q) = - \big( \nabla_\K \E(\Q; \K) + \nabla_\K \E(\K) \big).
    \end{equation}

Using Lemmas~\ref{lem:key0},\ref{lem:key0.5},\ref{lem:key} and \ref{lem:key2}, we have
    \begin{align}
    \label{eq: grad_EQK}
    \nabla_{\K} \E(\Q; \K)  &= \alpha \K - \sum_i \e_N^i(\softmax(\beta\Q \kb_i^T))^T \Q \notag \\ 
    &= \alpha \K - \softmax_2(\beta \Q \K^T)  \Q 
    \end{align}
 
     Second, by noting that $\E(\K) = \log \sum_{i=1}^N \exp(\frac{1}{2} \kb_i \kb_i^T)$, Lemma~\ref{lem:key2} informs us
    \begin{align*}
            \nabla_{\K}  \E(\K) &=  \begin{bmatrix} [\softmax(\xb)]_1 & \cdots & 0 \\ \vdots & \ddots & \vdots \\ 0 & \cdots &  [\softmax(\xb)]_N \end{bmatrix}\K
         \\ & =\D \bigg( \softmax \big(\frac{1}{2} \diag(\K \K^T)\big) \bigg) \K
    \end{align*}
    where $\D(\cdot)$ is a vector-to-diagonal-matrix operator that takes $N$-dimensional $\softmax$ vector as an input and returns a $N \times N$ diagonal matrix with $\softmax$ values as main diagonal entries.
    Therefore, one can finally obtain:
    \begin{equation}
    \begin{split}
    \nabla_\K \log p(\K \given[] \Q) &= \softmax_2 \big(\beta \K \Q^T \big)\Q   -  \bigg( \alpha\Ib + \D \Big( \softmax \Big(\frac{1}{2} \diag(\K \K^T) \Big) \Big) \bigg) \K.
    \end{split}
    \end{equation}

    By using the chain rule with $\K = \C \W_K$, the update rule of context vectors $\C$ is derived as in \eqref{eq: theory context update rule}.  
\end{proof}

\section{Pseudo-code for EBCU and EBCQ}
\label{sec: pseudocode}
This section provides the description of the pseudocode for the proposed Energy-based Bayesian Context Update (EBCU) and Energy-based Composition of Queries (EBCQ). Algorithm~\ref{alg:cascade} outlines the cascaded context propagation across cross-attention layers within the UNet model during the sampling step $t$. Note that the context is reinitialized at the beginning of each sampling step. On the other hand, Algorithm~\ref{alg:EBCU} details the EBCU implemented in each cross-attention layer. Specifically, the proposed EBCU provides a significant computational efficiency by reusing the similarity $\Q \K^T$, which requires computational cost $\mathcal{O}(N^2)$, to compute $\nabla_\K E(\Q; \K)$. Consequently, there is only a small amount of  additional computational overhead associated with the proposed EBCU.

\begin{algorithm}[H]
    \small
    \caption{Context cascade at sampling step $t$}
    \begin{algorithmic}[1]
    \Require {$\Q_t, \C_{clip}, \text{UNet}$}
        \State $\C_t \gets \C_{clip}$  // Re-initialize
        \For{layer in UNet} \do \\
            \If{layer is CrossAttention}
                \State $\Q_t, \C_t \gets \text{layer}(\Q_t, \C_t)$  // Algorithm 2
            \Else  
                \State $\Q_t \gets \text{layer}(\Q_t)$
            \EndIf
        \EndFor \label{algline:layer}
        \State $\Q_{t+1} \gets \Q_t$
    \State \textbf{return} {$\Q_{t+1}$}
    \end{algorithmic}\label{alg:cascade}
\end{algorithm}
\vspace{-0.5cm}
\begin{algorithm}[H]
    \small
    \caption{Energy-based Bayesian Context Update (EBCU)}
    \begin{algorithmic}[1]
    \Require {$\Q, \C, \W_q, \W_k, \W_v, \alpha, \beta, \gamma_{\text{attn}}, \gamma_{\text{reg}}$}
    \State $\Q, \K, \V \gets \Q\W_q, \C\W_k, \C\W_v$
    \State $\Sb = \Q\K^T$
    \State $\Q \gets \softmax_2(\beta \Sb)\V$
    \State $\nabla_\K E(\Q; \K) = \softmax_2 (\beta \Sb^T)\Q$
    \State $\nabla_\K E(\K) = -(\alpha \Ib + \D(\text{softmax}(\frac{1}{2} \text{diag}(\K \K^T))))\K$  
    \State $\Delta \C = (\gamma_{\text{attn}}\nabla_\K E(\Q;\K) + \gamma_{\text{reg}} \nabla_\K E(\K))\W_k^T$
    \State $\C \gets \C + \Delta \C$
    \State \textbf{return} {$\Q, \C$}
    \end{algorithmic}\label{alg:EBCU}
\end{algorithm}

Algorithm~\ref{alg: EBCQ} outlines the pseudocode for the EBCQ implemented for $M$ given contexts. For the simplicity, we exclude the EBCU from the algorithm. Nontheless, the EBCU and the EBCQ could be leveraged together.

\begin{algorithm}[H]
    \caption{Energy-based Composition of Queries (EBCQ)}
    \begin{algorithmic}[1]
    \Require{$\Q, \C=\{\C_1,...,\C_M\}, \W_q, \W_k, \W_v, \alpha_s, \beta$}
    \State $\Q \gets \Q\W_q$
    \For{$s \text{ in } [1,..., M]$}
    \State $\K_s, \V_s \gets \C_s\W_k, \C_s \W_v$
    \State $\Sb_s = \Q\K_s^T$
    \EndFor
    \State $Q \gets \frac{1}{M}\sum_{s=1}^M \alpha_s \text{softmax}_2(\beta \Sb_s)\V_s$
    \State \textbf{return} {$\Q$}
    \end{algorithmic} \label{alg: EBCQ}
\end{algorithm}

\section{Experimental setups}
\label{sec: exp_setup}
In this section, we describe detailed experimental setups for three applications including baseline method, hyper-parameter of the proposed method, and dataset if it is the case. Code: \url{https://github.com/EnergyAttention/Energy-Based-CrossAttention}.

\subsection{Common experimental setup}
We mainly leverage pre-trained Stable Diffusion v1-5 (except Table \ref{table: comparison_animal}: v1-4) which is provided by \textit{diffusers}, a Python library that offers various Stable Diffusion pipelines with pre-trained models. All images are sampled for 50 steps via PNDM sampler~\cite{liu2022pseudo} using NVIDIA RTX 2080Ti. In every experiment, we set the parameter $\alpha$ in Equation \eqref{eq: E(K)} to zero, focusing solely on controlling the values of $\gamma_{attn}$ and $\gamma_{reg}$. EBCU is applied to every task, and EBCQ is additionally employed in \ref{sec: setup-composition}. 

\textbf{Different learning rate for each token} It is worth noting that the $\gamma_{attn}$ and $\gamma_{reg}$ could be expressed as vectors.
In other words, if the context $C\in \mathrm{R}^{N\times d_c}$ is given, $\gamma_{attn}$ and $\gamma_{reg}$ are $N$-dimensional vectors. 
Hence, we have the flexibility to adjust the learning rate $\gamma_{\{\cdot\}}$, allowing us to increase or decrease the impact of certain tokens based on the user's intent. Unless otherwise noted, $\gamma_{attn}$ and $\gamma_{reg}$ is set to a constant for each text token.

\textbf{Learning rate scheduling} Since the proposed EBCU is leveraged for the diffusion model, one can readily introduce scheduling strategies for $\gamma_{attn}$ and $\gamma_{reg}$ along the sampling step $t$. We implement multiple variants such as `constant', `step', and `exponential decay' as follows.
\begin{equation}
\label{eq: lr schedule}
\begin{split}
    \text{[constant]} \quad \gamma(t) & = \gamma_0 \\
    \text{[step]} \quad \gamma(t) & = \gamma_0 \cdot \text{ReLu}(t-\tau)\\
    \text{[exp-decay]} \quad \gamma(t) &= \gamma_0\cdot \lambda^t \\
\end{split}
\end{equation}
where $\gamma_0$ is the initial value, $\text{ReLu}(x)=0$ if $x\leq0$, otherwise 1, $\tau$ denotes the temporal threshold, and $\lambda$ denotes the decay ratio.
Unless stated otherwise, the scheduling strategy is set to the `constant'.

\subsection{Multi-concept image generation}
\label{sec: setup-multi-concept}
We compared the performance of the proposed method with Structured Diffusion~\cite{feng2022training} which does not require additional training as our method. We leveraged the open-sourced official implementation~\footnote{\url{https://github.com/weixi-feng/Structured-Diffusion-Guidance}}.

For the proposed method, we set the $\gamma_{attn}$ and $\gamma_{reg}$ differently for each sample within [1e-2, 1.5e-2, 2e-2]. As shown in the following ablation studies~\ref{sec: analysis}, large $\gamma_{attn}$ tends to generate saturated images while large $\gamma_{reg}$ results in mixed/vanished contents. 

We found that using different learning rates for each context token is useful for multi-concept generation, especially when a single concept tends to dominate with a constant learning rate.
For example, given the main prompt \texttt{"A cat wearing a shirt"}, we set the $\gamma_{attn}$ for the \texttt{"shirt"} to 3e-2, while $\gamma_{attn}$ is set to 1.5e-2 for other tokens.
We have observed that doubling the $\gamma_{attn}$ for a text token to be emphasized is sufficient to achieve balanced multi-concept image generation for most cases. 

\subsection{Text-guided image inpainting}
\label{sec: setup-inpainting}
Additionally, we conducted a performance comparison between our proposed method and two alternative approaches: (\textbf{a}) Stable Inpaint\footnote{\url{https://huggingface.co/runwayml/stable-diffusion-inpainting}}, which fine-tunes the weights of Stable Diffusion through inpainting training, and (\textbf{b}) Stable Repaint\footnote{\url{https://github.com/huggingface/diffusers/tree/main/examples/community##stable-diffusion-repaint}}, which leverages the work of Lugmayr et al. \cite{lugmayr2022repaint} on the latent space of Stable Diffusion for the inpainting task. In the case of Stable Repaint, the mask is downsized and transferred into the latent space. We applied the Energy-based Bayesian Context Update (EBCU) technique to both methods, resulting in improved results compared to their respective baselines.

\textbf{Masked EBCU.} To further enhance the performance for the inpainting task, we introduce the concept of masked Energy-based Bayesian Context Update (masked EBCU). Specifically, let $M \in \mathbb{R}^{P_l^2 \times P_l^2}$ represent a diagonal matrix where the main diagonal values are derived from the downsampled inpainting mask for the $l$-th cross-attention layer, with an output spatial size of $P_l^2$. In Equation \eqref{eq: masked EBCU}, we modify the attention term~\eqref{eq: real context update rule} by incorporating the downsampled mask, effectively covering the query matrix as follows:
\begin{equation}
\label{eq: masked EBCU}
\C_{n+1} = \C_n + \gamma \bigg( \softmax_2 \big(\beta \K \Q^T \big) \M \Q - \bigg(\alpha\Ib + \D \Big( \softmax \Big(\frac{1}{2} \diag(\K\K^T ) \Big) \Big) \bigg)\K \bigg)\W_K^T.
\end{equation}

As evident in Equation~\eqref{eq: real context update rule}, the attention term updates the context vectors, aligning $\kb_i$ towards $\qb_j, j=1, \dots, P_l^2$, while considering the alignment strength between each $\qb_j$ and $\kb_i$. However, in the inpainting task, we have prior knowledge that the context vectors should be most aligned with the semantically relevant masked regions. Therefore, we mask out unrelated background spatial representations, allowing for the context vectors to be updated with a specific focus on the masked regions. This approach facilitates the incorporation of semantic information encoded by $\kb_i$ specifically into the spatial mask regions.

In our proposed method, we set different values for $\gamma_{attn}$ and $\gamma_{reg}$ for each sample, selected from the set [1e-2, 1.5e-2, 2e-2, 2.5e-2], to account for variations in the input samples.

\subsection{Image editing via compositional generation}
\label{sec: setup-composition}

We present empirical evidence demonstrating the effectiveness of our energy-based framework for compositional synthetic and real-image editing. The Energy-based Bayesian Context Update (EBCU) technique can be readily applied to both the main context vector ($\C_1$ in Section 3.2, $s=1$) and editorial context vectors ($\C_{s>1}$). Each EBCU operation influences the attention maps used in the Energy-based Composition of Queries (EBCQ), enhancing the conveyance of semantic information associated with each context. Note that $\alpha_s$ in ~\eqref{eq: update of composition E} represents the degree of influence of the $s$-th concept in the composition. In practice, we fix $\alpha_1=1$ for the main context, while $\alpha_{s>1}$ is tuned within the range of (0.5, 1.0).

Let $\gamma_{attn, s}$ and $\gamma_{reg, s}$ denote the step sizes for EBCU of the $s$-th context vector. If the editing process involves changing the identity of the original image (e.g., transforming a "cat" into a "dog"), we set both $\gamma_{attn, 1}$ and $\gamma_{reg, 1}$ to zero. Otherwise, if the editing maintains the original identity, we choose values for $\gamma_{attn, 1}$ and $\gamma_{reg, 1}$ from the range of (5e-4, 1e-3), similar to $\gamma_{attn, (s>1)}$ and $\gamma_{reg, (s>1)}$. All hyperparameters, including $\alpha_s$ and $\gamma_s$, are fixed during the quantitative evaluation process (more details in Section \ref{sec: quantitative} and Table \ref{table: hyperparam}).

To ensure consistent results, we maintained a fixed random seed for both real and synthetic image editing. For real image editing, we employed null-text pivotal inversion \cite{mokady2022null} to obtain the initial noise vector.

During the reverse diffusion process in Sections \ref{sec: setup-multi-concept} and \ref{sec: setup-inpainting}, we kept $\gamma$ fixed as a constant value. However, for compositional generation, we utilized step scheduling (Equation \ref{eq: lr schedule}) for $\gamma_s$ and $\alpha_s$. After converting the initial noise vector for real images or using a fixed random seed for synthetic images, EBCU and EBCQ are applied after a threshold time $\tau_s > 0$ for the $s$-th editorial context. This scheduling strategy helps to preserve the overall structure of generated images during the editing process. In our observations, a value of $\tau_s \in [10, 25]$ generally produces satisfactory results, considering a total number of reverse steps set to 50. However, one can increase or decrease $\tau_s$ for more aggressive or conservative editing, respectively.

The exemplary real images presented in Figures 5 and 6 of the main paper were sampled from datasets such as FFHQ \cite{karras2019style}, AFHQ \cite{choi2020stargan}, and ImageNet \cite{deng2009imagenet}. For a detailed quantitative analysis, please refer to Section \ref{sec: quantitative}.

\section{Quantitative Comparison}
\label{sec: quantitative}
In this section, we conducted a comparative analysis of the proposed framework against several state-of-the-art diffusion-based image editing methods \cite{meng2021sdedit, hertz2022prompt, mokady2022null, parmar2023zero}, following the experimental setup of \cite{parmar2023zero}. To ensure a fair comparison, all methods utilize the pre-trained Stable Diffusion v1-4, employ the PNDM sampler with an equal number of sampling steps, and adopt the same classifier-free guidance scale.

\subsection{Baseline Methods}
In addition to the Plug-and-Play method discussed in the main paper, we include the following baselines for comprehensive quantitative comparison:

\textbf{SDEdit \cite{meng2021sdedit} + word swap.}
This method introduces the Gaussian noise of an intermediate timestep and progressively denoises images using a new textual prompt, where the source word (e.g., \texttt{Cat}) is replaced with the target word (e.g., \texttt{Dog}).

\textbf{Prompt-to-prompt (P2P) \cite{hertz2022prompt}.}
P2P edits generated images by leveraging explicit attention maps from a source image. The source attention maps $M_t$ are used to inject, re-weight, or override the target maps based on the desired editing operation. These original maps act as hard constraints for the edited images.

\textbf{DDIM + word swap \cite{mokady2022null}.}
This method applies null-text inversion to real input images, achieving high-fidelity reconstruction. DDIM sampling is then performed using inverted noise vectors and an edited prompt generated by swapping the source word with the target.

\textbf{pix2pix-zero \cite{parmar2023zero}.}
pix2pix-zero first derives a text embedding direction vector $\triangle c_{\text{edit}}$ from the source to the target by using a large bank of diverse sentences generated from a state-of-the-art sentence generator, such as GPT-3 \cite{brown2020language}. Inverted noise vectors are denoised with the edited text embedding, $c + \triangle c_{\text{edit}}$, and cross-attention guidance to preserve consensus.

\textbf{Cycle-diffusion \cite{wu2022unifying}.} Cycle-diffusion reformulates diffusion models as deterministic mappings from a Gaussian latent code to images. It presents a DPM-Encoder that allows for encoding images into this latent space.

\subsection{Dataset}
For the animal transition task, we focus on three image-to-image translation tasks: (1) translating cats to dogs (cat $\rightarrow$ dog), (2) translating horses to zebras (horse $\rightarrow$ zebra), and (3) adding glasses to cat input images (cat $\rightarrow$ cat with glasses). Following the data collection protocol of \cite{parmar2023zero}, we retrieve 250 relevant cat images and 213 horse images from the LAION 5B dataset \cite{schuhmann2022laion} using CLIP embeddings of the source text description. We select images with a high CLIP similarity to the source word for each task.

For the human transition task, we focus on three image-to-image translation tasks: (1) woman $\rightarrow$ man, (2) woman $\rightarrow$ woman w/ glasses, and (3) woman $\rightarrow$ man w/ glasses. For this, we select source images of women without wearing glasses from CelebA-HQ. To ensure fair comparisons, all methods leverage the same version of Stable Diffusion, sampler, sampling steps, etc.

\subsection{Metrics}
Motivated by \cite{tumanyan2022plug, parmar2023zero}, we measure CLIP Accuracy and DINO-ViT structure distance. Specifically, (\textbf{a}) CLIP Acc represents whether the targeted semantic contents are well reflected in the generated images. It calculates the percentage
of instances where the edited image has a higher similarity
to the target text, as measured by CLIP, than to the original source text \cite{parmar2023zero}. On the other hand, (\textbf{b}) structure distance \cite{tumanyan2022plug, tumanyan2022splicing} measures whether the overall structure of the input image is well preserved. It is defined as the difference in self-similarity of the keys extracted from the attention module at the deepest DINO-ViT \cite{caron2021emerging} layer. 

\subsection{Details}
The main context vector $\C_{main}$ is encoded given a main prompt automatically generated by BLIP \cite{li2022blip}. In addition, the editorial context vectors $\C_{src}$ and $\C_{tgt}$ are encoded given the text descriptions of the source and target concept, i.e. source and target prompt. For example, for a cat $\rightarrow$ dog task (cat $\rightarrow$ cat w/ glasses), the source prompt is \texttt{"cat"} (\texttt{"cat wearing glasses"}), and the target prompt is \texttt{"dog"} (\texttt{"without glasses"}). Then we apply EBCU and EBCQ based on the obtained context vectors. Please refer to Table \ref{table: hyperparam} for the hyperparameter configurations. 

\subsection{Results} Table \ref{table: comparison_animal} and \ref{table: comparison_human} show that the proposed energy-based framework gets a high CLIP-Acc while having low Structure Dist. It implies that the proposed framework can perform the best edit while still retaining the structure of the original input image. This is a remarkable result considering that the proposed framework is not specially designed for the real-image editing task. Moreover, the proposed framework does not rely on the large bank of prompts and editing vector $\triangle c_{\text{edit}}$ \cite{parmar2023zero} which can be easily incorporated into our method.

While DDIM + word swap records remarkably high CLIP-Acc in horse $\rightarrow$ zebra task, Figure~\ref{fig:ddim_inversion_zebra} and \ref{fig:horse2zebra} show that such improvements are based on unintended changes in the overall structure. Table \ref{table: hyperparam} summarizes the hyperparameter settings for each task. Examples of results are presented in Figure \ref{fig:cat2dog} and \ref{fig:horse2zebra}.

\begin{table}[H]
\caption{(Human transition) Comparison to state-of-the-art diffusion-based editing methods. Dist for DINO-ViT Structure distance.}  
\label{table: comparison_human}
\centering
\resizebox{0.99\textwidth}{!}{
\begin{tabular}{c c c c c c c}
\toprule
\multicolumn{1}{c}{\multirow{2}{*}{\textbf{Method}}} & \multicolumn{2}{c}{(\textbf{a}) \textbf{Woman} $\rightarrow$ \textbf{Man}} & \multicolumn{2}{c}{(\textbf{b}) \textbf{Woman} $\rightarrow$ \textbf{Woman w/ glasses}} & \multicolumn{2}{c}{(\textbf{c}) \textbf{Woman} $\rightarrow$ \textbf{Man w/ glasses}} \\
\cmidrule{2-3} \cmidrule{4-5} \cmidrule{6-7}
\multicolumn{1}{c}{} & CLIP Acc ($\uparrow$) & Dist ($\downarrow$) & CLIP Acc ($\uparrow$) & Dist ($\downarrow$) & CLIP Acc ($\uparrow$) & Dist ($\downarrow$) \\
\midrule
{CycleDiffusion \cite{wu2022unifying}} & {57.0$\%$} & {0.156} & {74.0$\%$} & {\textbf{0.018}} & {63.5$\%$} & {0.237} \\
{Pix2pix-zero} & {93.0$\%$} & {0.043} & {66.0$\%$} & {0.024} & {93.9$\%$} & {0.052} \\
{Stable Diffusion + ours} & {\textbf{94.0}$\%$} & {0.035} & {99.0$\%$} & {0.029} & {\textbf{96.0}$\%$} & {\textbf{0.037}} \\
\bottomrule
\end{tabular}
}
\end{table}

\begin{table}[H]
\caption{Hyperparameter configurations for each editing task. Each task index comes from Table \ref{table: comparison_animal}. $\gamma_{attn, main}=0$ and $\gamma_{reg, main}=0$ as mentioned in section \ref{sec: setup-composition}. Note that $\alpha_{src} < 0$ for the concept negation (related ablation study in Figure \ref{fig:ablation}). $\tau_s$ denotes the warm-up period for step scheduling in (\ref{eq: lr schedule}) and Section \ref{sec: setup-composition}.}  
\label{table: hyperparam}
\centering
\resizebox{0.8\textwidth}{!}{
\begin{tabular}{c c c c c c c c c}
\toprule
{Task} & {$\alpha_{src}$} & {$\alpha_{tgt}$} & {$\gamma_{\cdot, main}$} & {$\gamma_{attn, src}$} & {$\gamma_{reg, src}$} & {$\gamma_{attn, tgt}$} & {$\gamma_{reg, tgt}$} & {$\tau_s$} \\
\midrule
{(\textbf{a})} & {-0.65} & {0.75} & {0} & {5e-4} & {5e-4} & {6e-4} & {6e-4} & {25} \\
{(\textbf{b})} & {-0.5} & {0.6} & {0} & {4e-4} & {4e-4} & {5e-4} & {5e-4} & {15} \\
{(\textbf{c})} & {-0.6} & {0.7} & {0} & {1e-3} & {1e-3} & {1e-3} & {1e-3} & {17} \\
\bottomrule
\end{tabular}
}
\end{table}

\begin{figure}
    \centering
    \includegraphics[width=\linewidth]{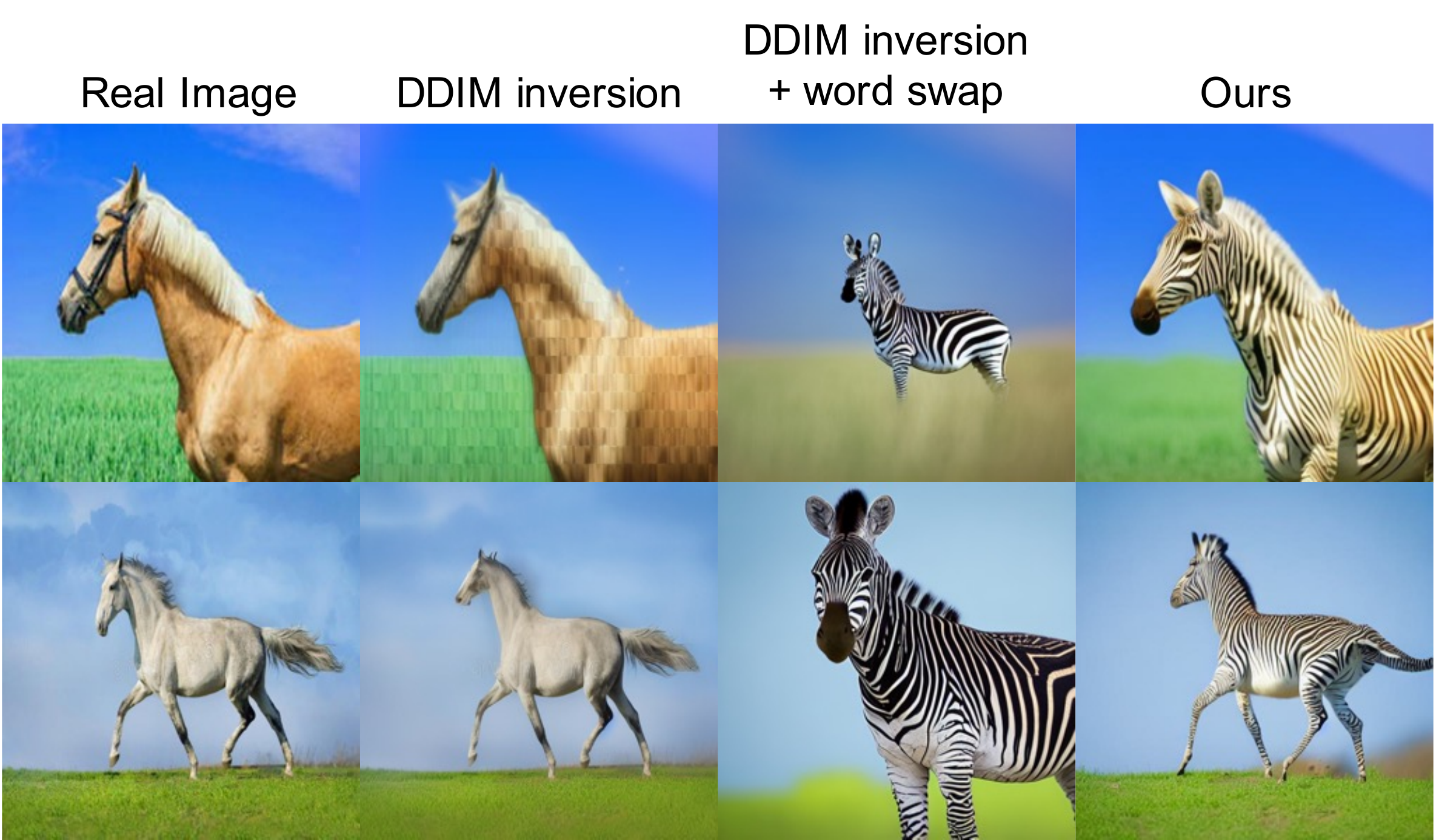}
    \vspace{-0.5cm}
    \caption{Image editing comparison with DDIM-inversion. Generated samples by DDIM-inversion with word swap readily deviate the original data contents, while the proposed method avoids undesired changes.}
    \label{fig:ddim_inversion_zebra}
\end{figure}

\section{Ablation study and more results}
\label{sec: analysis}

\begin{figure}[H]
    \centering
    \includegraphics[width=\linewidth]{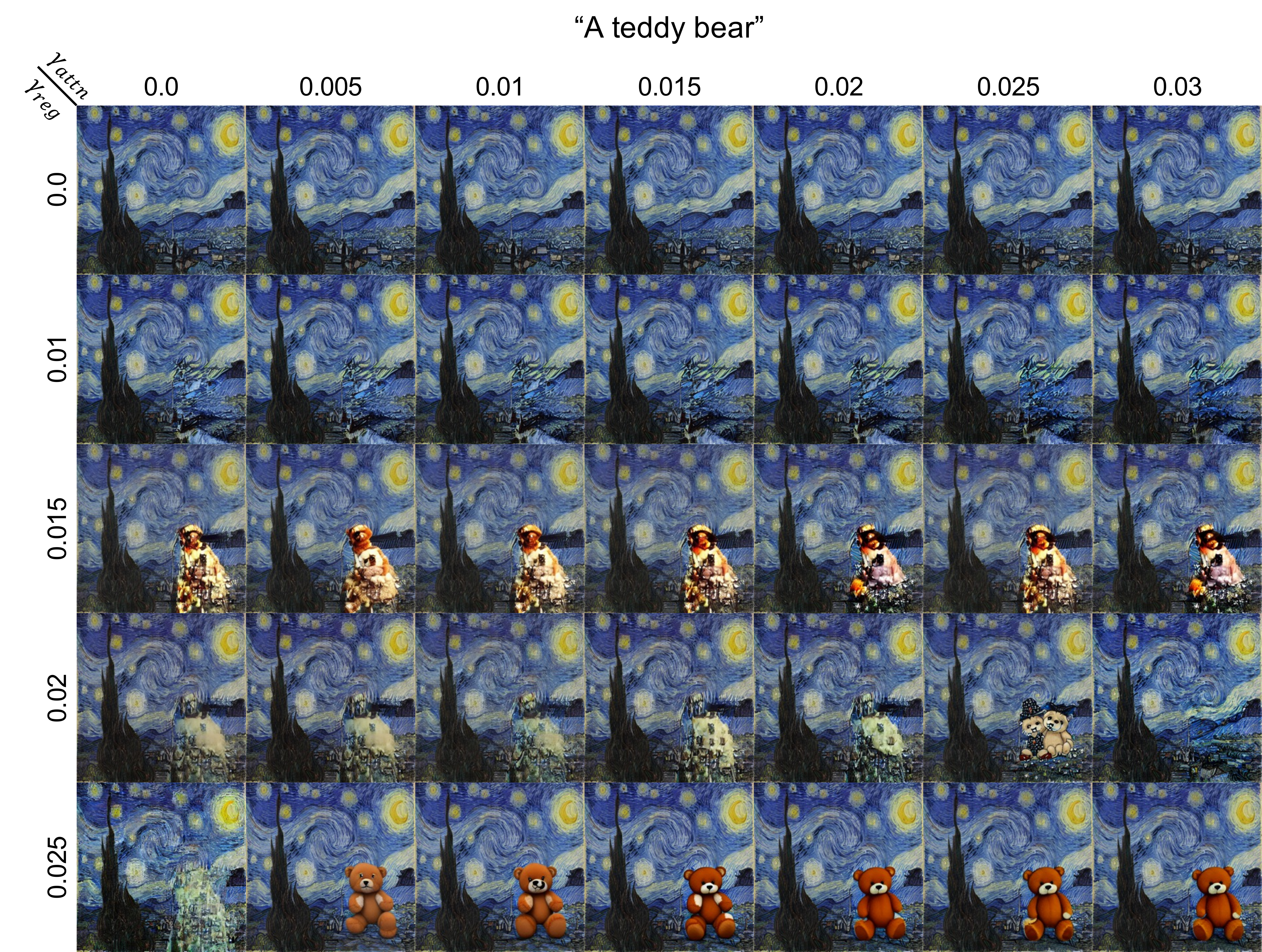}
    \vspace{-0.5cm}
    \caption{Ablation results for $\gamma_{attn}$ and $\gamma_{reg}$. All samples are generated from the same random noise.}
    \label{fig:teddy}
\end{figure}

\begin{figure}[H]
    \centering
    \includegraphics[width=\linewidth]{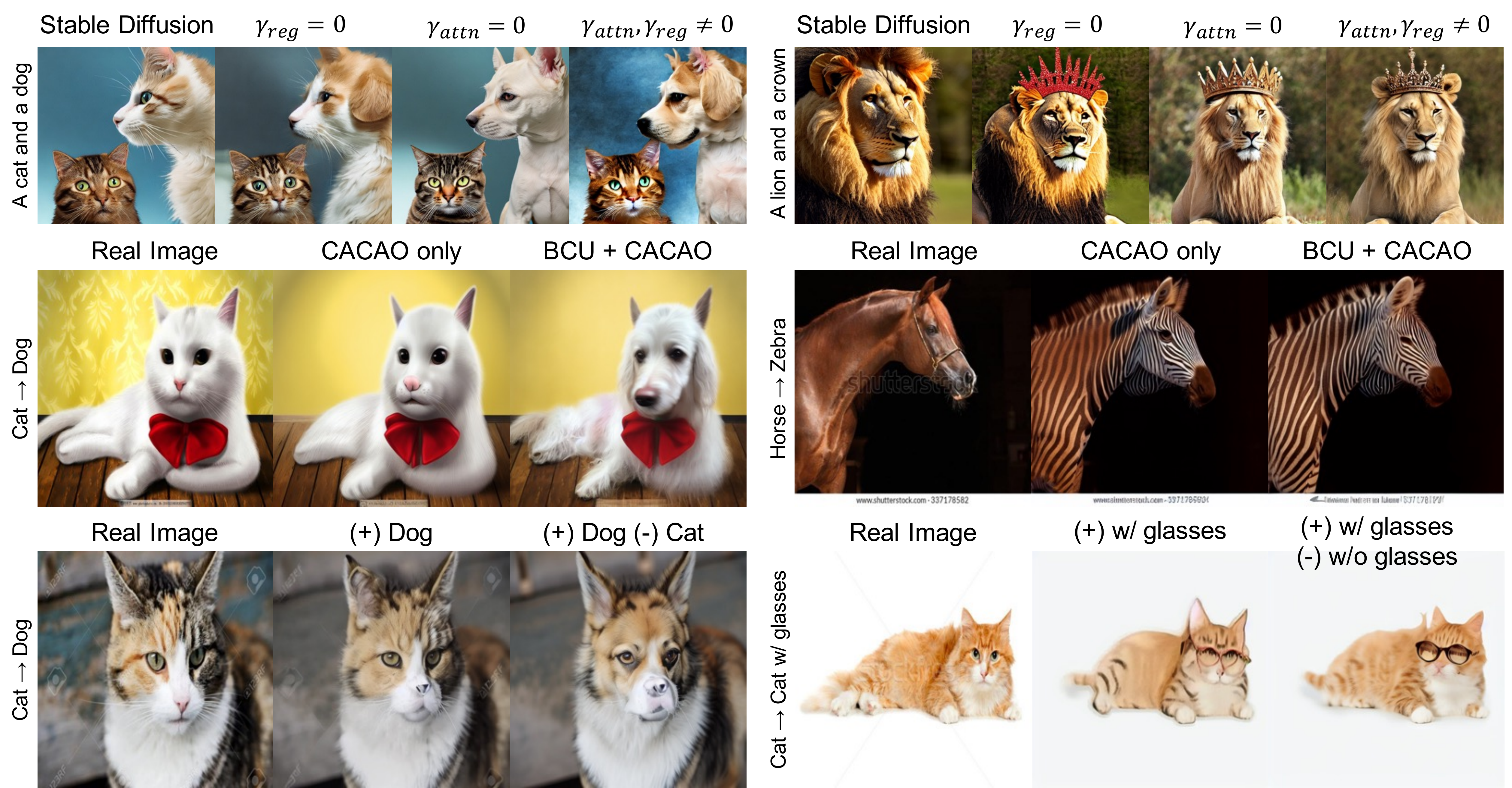}
    \vspace{-0.5cm}
    \caption{Ablation study results. The first row shows multi-concept generation examples with varying $\gamma_{attn}$ and $\gamma_{reg}$, while the second row shows real image editing examples with varying the usage of EBCU and EBCQ. The last row shows the effect of negative prompt for the image editing application.}
    \label{fig:ablation}
\end{figure}

\textbf{Attention and regularization terms.} 
To access the degree of performance improvement attained by the proposed EBCU, we conducted an ablation study for the attention and the regularization terms by regulating $\gamma_{attn}$ and $\gamma_{reg}$ for the text-guided image inpainting (Figure~\ref{fig:teddy}) and the multi-concept image generation (Figure~\ref{fig:ablation}).
From the Figure~\ref{fig:teddy}, we can observe that the desired content is generated when proper range of $\gamma_{attn}$ and $\gamma_{reg}$ are given. Specifically, once $\gamma_{reg}$ is set to a valid value, the EBCU consistently generate a \texttt{"teddy bear"} with various $\gamma_{attn}$, otherwise it generates background or imperfect objects. This result emphasizes the role of the introduced prior energy $\E(\K)$. Furthermore, the $\gamma_{attn}$ also affects to the context alignment of the generated sample (for instance $\gamma_{attn}=0.025$ and $\gamma_{reg}=0.02$), which highlights the importance of the introduced conditional energy function $\E(\Q; \K)$.
The same evidences could be found in the first row in Figure~\ref{fig:ablation} which are the multi-concept image generation examples.

\textbf{Synergy between EBCU and EBCQ.} While both EBCU and EBCQ are designed from the common energy-based perspective, each operation is originated from different energy functions $\E(\K; \Q)$ and $\hat{\E}(\Q; \{\K_s\}_{s=1}^M)$, respectively. 
This fact suggests the synergistic energy minimization by combining the EBCU and EBCQ, which could further improve the text-conditional image generation.
To investigate this further, we conducted an ablation study using a real image editing application. Specifically, we compared the editing performance when solely utilizing EBCQ and when combining EBCU with EBCQ.
The second row in Figure~\ref{fig:ablation} is the result of the ablation study that shows fully-compatibility of the EBCU and EBCQ. 
Importantly, the incorporation of the EBCU improves the quality of the generated images. While the EBCQ alone effectively captures the context of the given editing concept, the addition of EBCU enhances the fine-grained details in the generated outputs.

\textbf{Importance of concept negation.} 
Remark that a negative $\alpha_s$ in \eqref{eq: update of composition E} denotes the negation of given editing prompt. 
We empirically observed that the concept negation may significantly contribute to the performance of compositional generation.
Specifically, for the image-to-image translation task in Table \ref{table: comparison_animal}, we apply both positive and negative guidance with the target (e.g. \texttt{Dog}) \textit{and} source (e.g. \texttt{Cat}) concepts, respectively, following the degree of guidance denoted in Table \ref{table: hyperparam}.
The third row in Figure~\ref{fig:ablation} shows the impacts of source concept negation in the image-to-image translation task. While the positive guidance alone may fail to remove the source-concept-related features, e.g. eyes of the \texttt{Cat}, the negative guidance removes such conflicting existing attributes. This implies that the proposed framework enables useful arithmetic of multiple concepts for both real and synthetic image editing.

\textbf{Prior energy and $\alpha$.} 
While $\frac{\alpha}{2} \diag(\K \K^T)$ in \eqref{eq: E(Q|K)} penalizes norm of each context vectors uniformly,
the proposed prior energy function $\E(\K)$ adaptively regularizes the smooth maximum of $\| \kb_i \|$. 
Intuitively, adaptive penalization prevents the excessive suppression of context vectors, potentially resulting in images that are more semantically aligned with a given context.
To demonstrate the effectiveness of adaptive penalization in the prior energy function, we conducted a multi-concept image generation task with varying $\alpha$ in \eqref{eq: theory context update rule} from 0 to 1, while fixing other hyperparameters. 
Figure~\ref{fig:ablation_alpha} illustrates the gradual disappearance of salient contextual elements in the generated images depending on the change of $\alpha$. Specifically, the crown is the first to diminish, followed by subsequent context elements, with the lion being the last to vanish with $\alpha=1$.
This result highlights the validity of the adaptive penalization for the context vectors which stems from the prior energy function.

\textbf{Context shift analysis.} Since the proposed framework updates the initial context vector, one may be concerned that the optimized vector may be severely shifted from the original context so that it loses the intended semantic meaning. 
Here, we emphasize that the change of context vector is adaptive, not an invalid shift, because the proposed BCU allows an adaptive propagation of the context vector through UNet, resulting in a context vector that is better aligned with $Q_t$ on cross-attention space. 

To further support our claim, we use the updated $C_T$ ($T$ for a number of sampling time steps) from the proposed method as a fixed context vector instead of $C_{\text{clip}}$ and perform multi-concept generation using the conventional cross-attention operation. Since the energy functions are defined differently for each sampling time step, using $C_T$ as a fixed context for each sampling time step may result in low-quality samples. However, this approach allows us to evaluate whether the updated context contains the correct semantics of the given textual conditions, in contrast to 
$C_{\text{clip}}$. Figure~\ref{fig:update_analysis}a demonstrates that the updated context vector does indeed capture the correct semantics of the given textual conditions (e.g., both black horse and yellow room). 

\textbf{Multi-step update.} In order to better understand the context update, we implement a multistep context update for comparison while keeping the propagation of $\C$ to subsequent layers. For the multi-concept generation and inpainting tasks, we observe that this multiple update of the context vector actually improves the quality of generated images (Figure~\ref{fig:update_analysis}b). This is in the same line as our perspective that one forward path of context vector is equivalent to one-step gradient descent for energy function. That being said, we also observe that the multiple context update is relatively computationally expensive and a single-step update is usually sufficient for improved performance. Therefore, we decide to use a single-step update with context propagation to subsequent layers. We believe these results may lay down foundation for further research on understanding the alignment between latent image representations and context embeddings.

\begin{figure}[htbp]
    \centering
    \includegraphics[width=\linewidth]{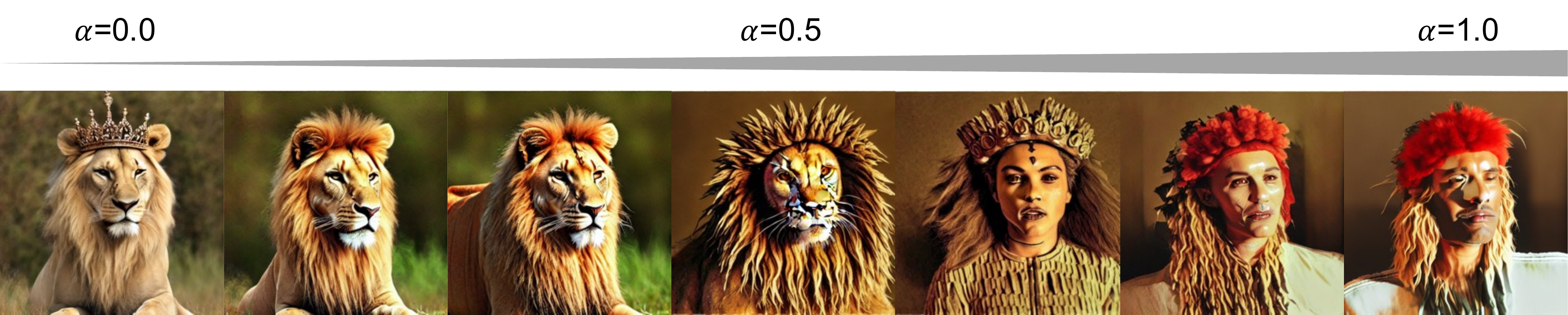}
    \vspace{-0.5cm}
    \caption{Generated samples with varying $\alpha$ values. As $\alpha$ increases, the generated images progressively deviate from the intended context, \texttt{"A lion and a crown"}.}
    \label{fig:ablation_alpha}
\end{figure} 

\begin{figure}[htbp]
    \centering
    \includegraphics[width=\linewidth]{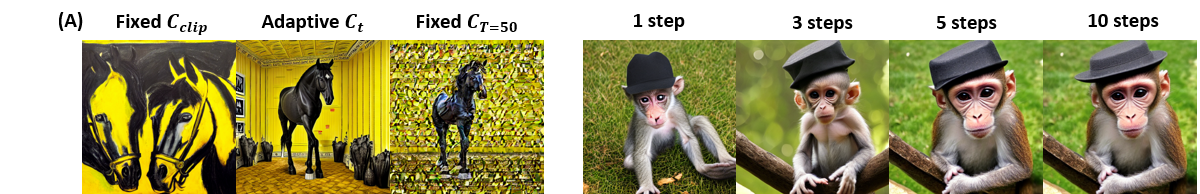}
    \vspace{-0.5cm}
    \caption{(\textbf{a}) Muti-concept generation results with fixed and adaptive context embeddings. For the last column, updated context via the BCU at the final timestep is given as a fixed context embedding. (\textbf{b}) Multi-concept image generation results in a different number of context updates. $\gamma_{attn}$ and $\gamma_{reg}$ are reduced in proportion
to the number of updates.}
    \label{fig:update_analysis}
\end{figure}

\begin{figure}[htbp]
    \centering
    \includegraphics[width=\linewidth]{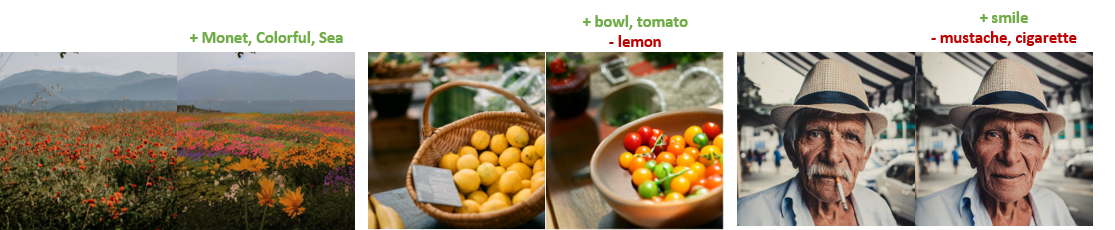}
    \vspace{-0.5cm}
    \caption{Further results for compositional real-image editing.}
    \label{fig:more real image editing}
\end{figure}

\begin{figure}[htbp]
    \centering
    \includegraphics[width=\linewidth]{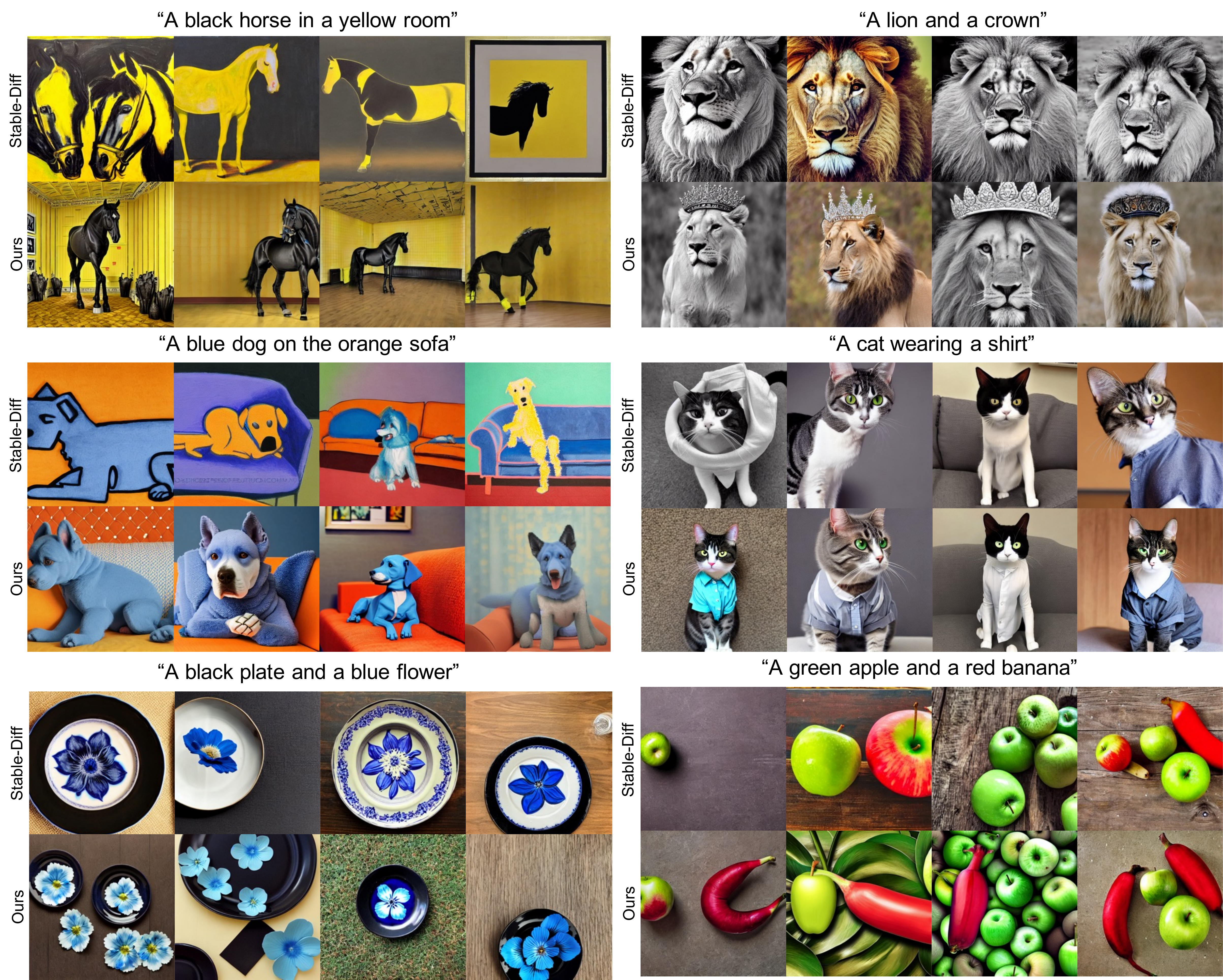}
    \caption{Further results for multi-concept image generation. Best views are displayed.}
    \label{fig:multiconcept_further}
\end{figure}

\clearpage

\begin{figure}
    \centering
    \includegraphics[width=\linewidth]{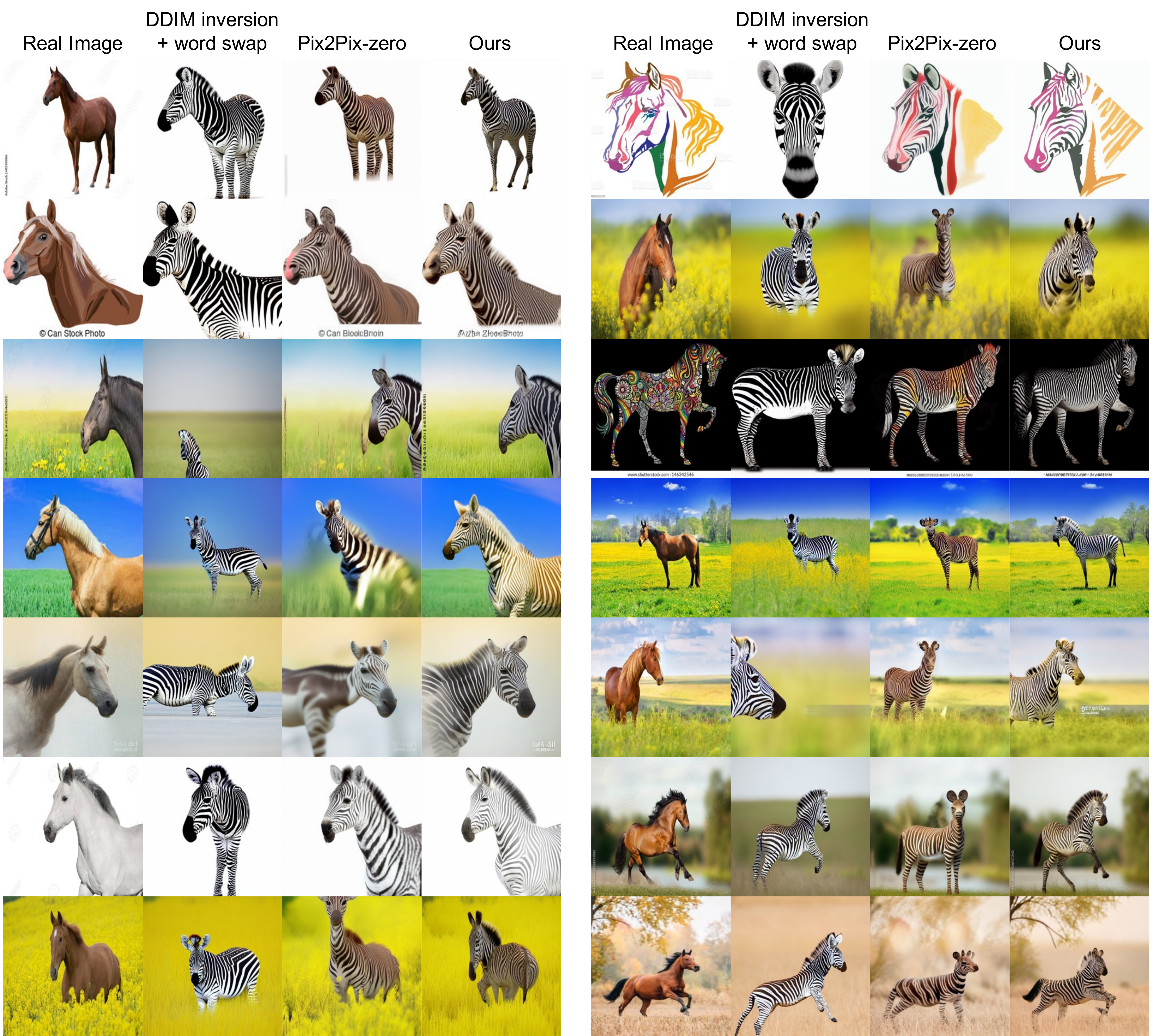}
    \caption{Further results for real image editing: horse to zebra.}
    \label{fig:horse2zebra}
\end{figure}

\clearpage

\begin{figure}
    \centering
    \includegraphics[width=\linewidth]{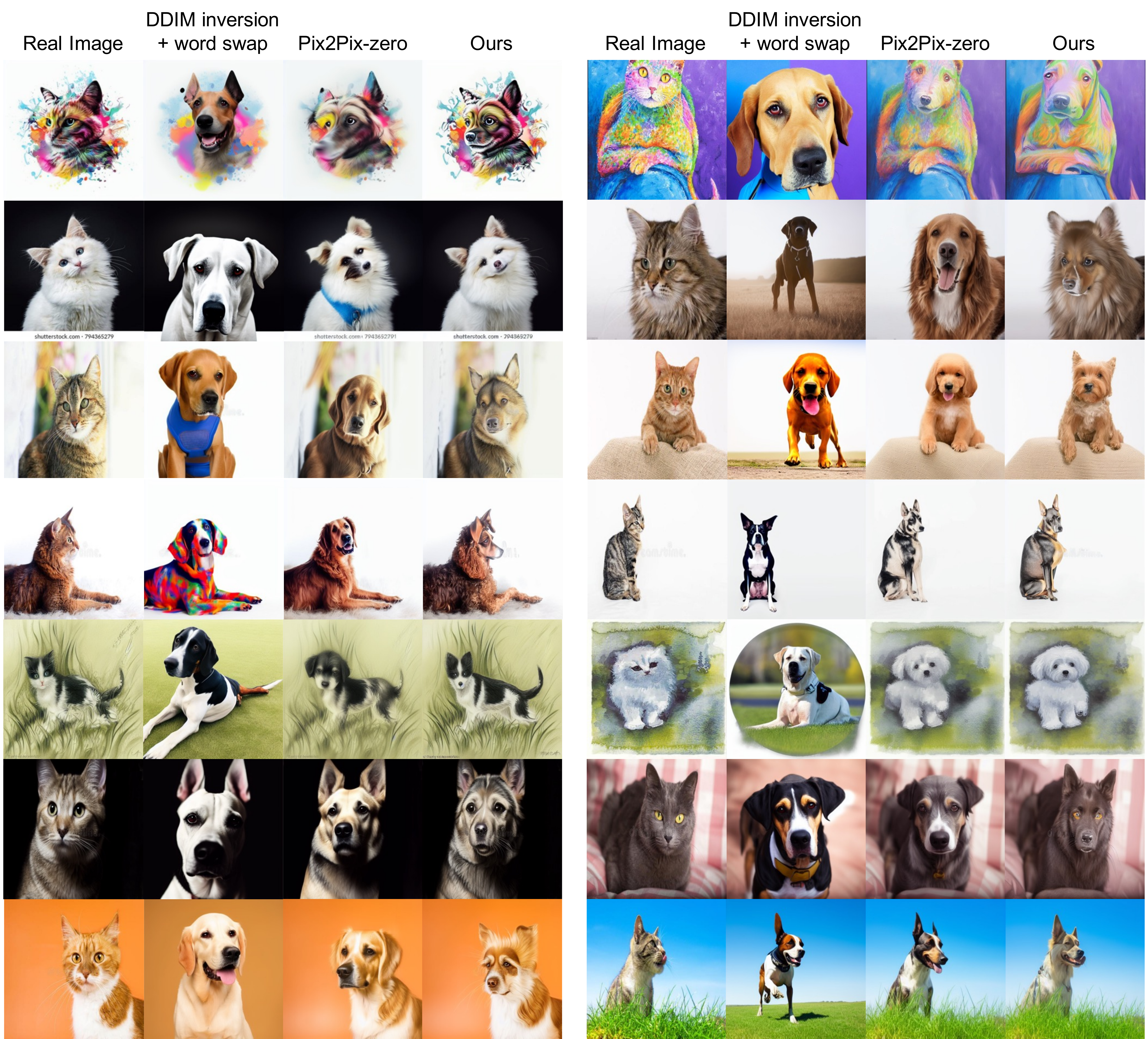}
    \caption{Further results for real image editing: cat to dog.}
    \label{fig:cat2dog}
\end{figure}

\clearpage

\begin{figure}
    \centering
    \includegraphics[width=\linewidth]{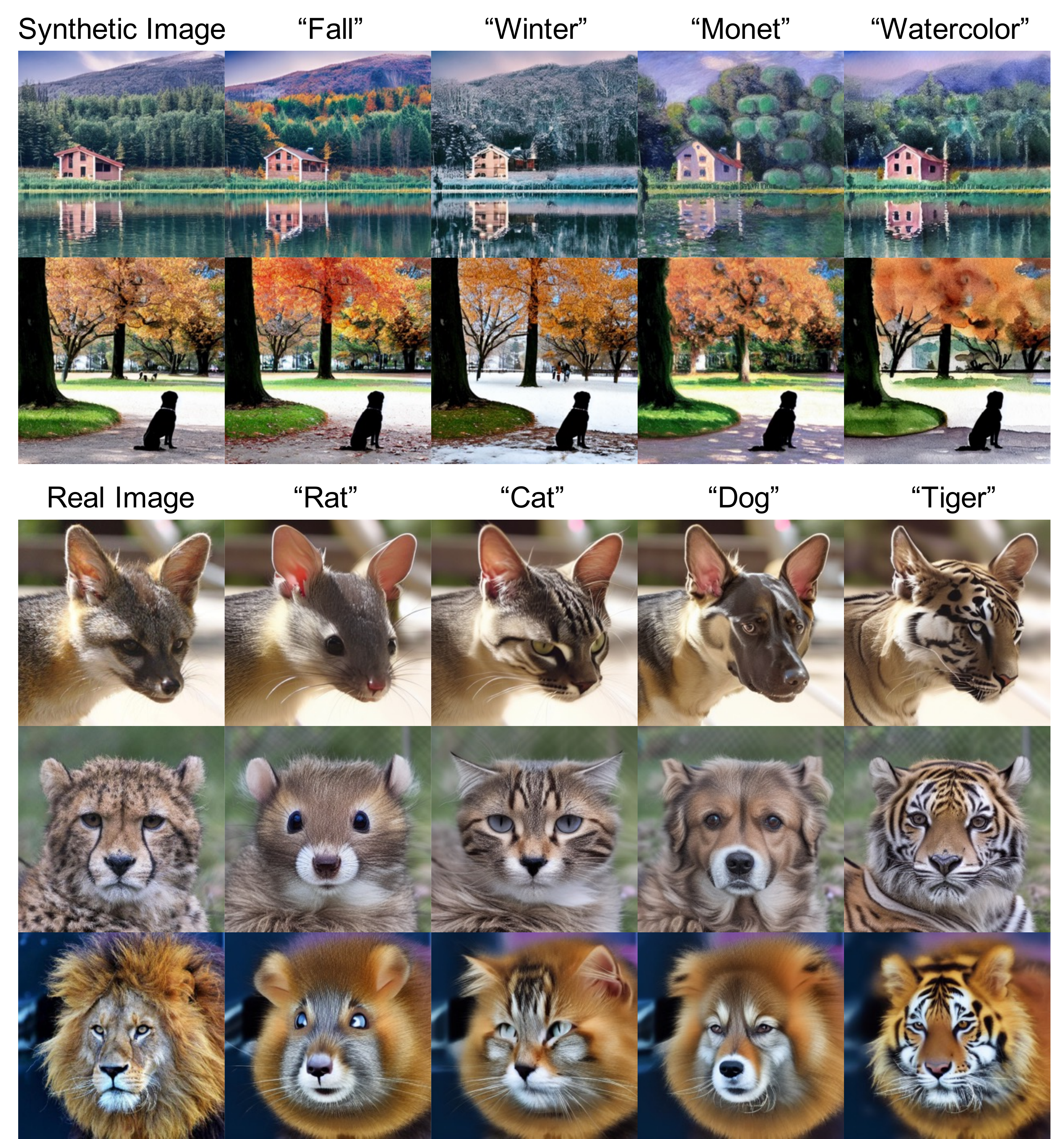}
    \caption{Further results for image editing with varying text prompts. Best views are displayed.}
    \label{fig:style_further}
\end{figure}




\end{document}